\def\Lalign{{\mathcal L_{\textnormal{align}}}}
\def\Luniform{{\mathcal L_{\textnormal{uniform}}}}
\def\Lbyol{{\mathcal L_{\textnormal{BYOL}}}}
\def\Lcross{{\mathcal L_{\textnormal{cross-model}}}}
\def\Lcontrast{{\mathcal L_{\textnormal{contrast}}}}
\def\Lbyolprime{{\mathcal L_{\textnormal{BYOL}^\prime}}}
\def\Lraft{{\mathcal L_{\textnormal{RAFT}}}}
\def\eqref#1{equation~\ref{#1}}
\def\1{\bm{1}}
\def\vzero{{\bm{0}}}
\def\vt{{\bm{t}}}
\def\vv{{\bm{v}}}
\DeclareMathAlphabet{\mathsfit}{\encodingdefault}{\sfdefault}{m}{sl}
\SetMathAlphabet{\mathsfit}{bold}{\encodingdefault}{\sfdefault}{bx}{n}
\def\gB{{\mathcal{B}}}
\def\gH{{\mathcal{H}}}
\def\gL{{\mathcal{L}}}
\def\gP{{\mathcal{P}}}
\def\gT{{\mathcal{T}}}
\def\gX{{\mathcal{X}}}
\newcommand{\E}{\mathbb{E}}
\theoremstyle{remark} 
\newtheorem*{mar}{Theorem}
\newtheorem{theorem}{Theorem}[section]
\newtheorem*{proof}{Proof}[section]
\newtheorem{definition}{Definition}[section]
\title{Run Away From your Teacher: Understanding BYOL by a Novel Self-Supervised Approach}
\author{Haizhou Shi\thanks{Equal contribution.} \\
Zhejiang University\\
\texttt{shihaizhou@zju.edu.cn} \\
\And
Dongliang Luo$^*$ \\
Fudan University \\
\texttt{dlluo19@fudan.edu.cn} \\
\And
Siliang Tang\thanks{Corresponding author.}\\
Zhejiang University \\
\texttt{siliang@zju.edu.cn}\\
\AND
Jian Wang \\
Fudan University \\
\texttt{jian\_wang@fudan.edu.cn} \\
\And
Yueting Zhuang \\
Zhejiang University \\
\texttt{yzhuang@zju.edu.cn}
}
\begin{document}

\maketitle

\begin{abstract}
Recently, a newly proposed self-supervised framework Bootstrap Your Own Latent (BYOL) seriously challenges the necessity of negative samples in contrastive learning frameworks. BYOL works like a charm despite the fact that it discards the negative samples completely and there is no measure to prevent collapse in its training objective. In this paper, we suggest understanding BYOL from the view of our proposed interpretable self-supervised learning framework, Run Away From your Teacher (RAFT). RAFT optimizes two objectives at the same time: (i) aligning two views of the same data to similar representations and (ii) running away from the model's Mean Teacher (MT, the exponential moving average of the history models) instead of BYOL's running towards it. The second term of RAFT explicitly prevents the representation collapse and thus makes RAFT a more conceptually reliable framework. We provide basic benchmarks of RAFT on CIFAR10 to validate the effectiveness of our method. Furthermore, we prove that BYOL is equivalent to RAFT under certain conditions, providing solid reasoning for BYOL's counter-intuitive success.
\end{abstract}

\section{Introduction}
Recently the performance gap between self-supervised learning and supervised learning has been narrowed thanks to the development of contrastive learning~\citep{chen2020improved, chen2020simple, tian2019contrastive, chen2020improved, sohn2016improved, zhuang2019local, he2020momentum, oord2018representation, hadsell2006dimensionality}. Contrastive learning distinguishes positive pairs of data from the negative. It has been shown that when the representation space is $l_2$-normalized, i.e. a hypersphere, optimizing the contrastive loss is approximately equivalent to optimizing the \textit{alignment} of positive pairs and the \textit{uniformity} of the representation distribution at the same time~\citep{wang2020understanding}. This equivalence conforms to our intuitive understanding. One can easily imagine a failed method when only either of the properties is optimized: aligning the positive pairs without uniformity constraint causes representation collapse, mapping different data all to the same meaningless point; scattering the data uniformly in the representation space without aligning similar ones yields no more meaningful representation than random. 

The proposal of Bootstrap Your Own Latent~(BYOL) challenges the consensus that negative samples are necessary to contrastive methods~\citep{grill2020bootstrap}. BYOL trains the model~(online network) to predict its Mean Teacher~(MT, ~\citet{tarvainen2017mean}) on two differently augmented views of the same data. There is no explicit constraint on uniformity in BYOL, while the expected collapse never happens. What's more, it reaches the SOTA performance on the downstream tasks. Although BYOL has been empirically proven to be an effective self-supervised learning approach, the mechanism that keeps it from collapse remains to be explored.

In this paper, we explain how BYOL works through another interpretable learning framework which leverages the MT in the exact opposite way. Based on a series of theoretical derivation and empirical approximation, we build a new self-supervised learning framework, Run Away From your Teacher~(RAFT), which optimizes two objectives at the same time: (i) minimize the representation distance between two samples from a positive pair and (ii) maximize the representation distance between the online and its MT. The second objective of RAFT incorporates the MT in a way exactly opposite to BYOL, and it explicitly prevents the representation collapse by encouraging the online to be different from its history. Moreover, we empirically show that the second objective of RAFT is a more effective and consistent regularizer for the alignment loss, which makes RAFT more favorable than BYOL. Finally, we understand some crucial behaviors of BYOL by theoretically proving that BYOL is a special form of RAFT when certain conditions and approximation hold. This proof explains why collapse does not happen in BYOL, and also makes the performance of BYOL an approximate guarantee for RAFT.

The main body of the paper is organized in the same order of how we explore the properties of BYOL and establish RAFT based on them~(refer to Appendix~\ref{app:thread} for more details). In section~\ref{sec:3}, we investigate the phenomenon that BYOL fails to work when the predictor is removed. In section~\ref{sec:4}, we establish two disentangled objectives out of BYOL by upper bounding. Based on that, we propose RAFT due to its stronger regularization effect and its accordance with our knowledge. In section~\ref{sec:5}, we prove that, as a representation learning framework, BYOL is a special form of RAFT under certain achievable conditions. 

In summary, our contributions are listed as follows:
\begin{itemize}
	\item We present a novel self-supervised learning framework RAFT that minimizes the alignment and maximizes the distance between the online network and its MT. The motivation of RAFT conforms to our understanding of balancing alignment and uniformity of the representation space, and thus could be easily extended and adapted to future problems. 
	\item We equate two seemingly opposite ways of incorporating MT in contrastive methods under certain conditions. By doing so, we partially explain why BYOL doesn't collapse and provide a novel framework to understand BYOL, under which two questions need to be answered in the future: (i) on what condition is the upper bound loss of BYOL equivalent to BYOL itself? (ii) how does the predictor help optimize the representation uniformity? 
\end{itemize}
\section{Background and related work}
\label{sec:background}

\subsection{Contrastive learning background}
\label{subsec:contrastive}
Contrastive methods relies on the assumption that two views of the same data point share the information, and thus creates a positive pair. By separating the positives and the negatives, the neural network trained by the algorithm learns to extract the most useful information from the data and performs better on the downstream tasks~\citep{chen2020improved, chen2020simple, tian2019contrastive, chen2020improved, sohn2016improved, zhuang2019local, he2020momentum, oord2018representation}. Typically, the algorithm uses the InfoNCE objective: 
\begin{align} 
	\Lcontrast(h, K) = \E_{\substack{(x, x^+) \sim \gP_{\text {pos}}\\ \{x_{i}^{-}\}_{i=1}^{K} \sim \gX^K}} 
	\left[-\log \frac{e^{h(x, x^+)}} {e^{h(x, x^+)}+ \sum_{i=1}^K e^{h(x, x_i^-)}}\right],
\end{align}
where $(x, x^+)$ are sampled from the positive pair distribution $\gP_{\text{pos}}$, which is built by a series of data augmentation functions [ref].
The negative samples $\{x_i^-\}^K$ are i.i.d sampled for $K$ times from the data distribution $\gX$; function $h(x, y)$ measures the similarity between two input data $(x, y)$. 
Empirically for the sake of symmetry, the measurement function $h(x, y) = d(f(x), f(y))$ has an encoder $f(\cdot)$ and a similarity metric $d(\cdot, \cdot)$ evaluating how close the two representations are. 

\citet{wang2020understanding} puts the contrastive learning under the context of hypersphere and formally showcases that optimizing the contrastive loss is equivalent to optimizing two metrics of the encoder network when the size of negative samples $K$ is sufficiently large: the alignment of the two augmented views of the same data and the uniformity of the representation population. We introduce the alignment objective and uniformity objective as follows.

\begin{definition}[Alignment loss]{} \label{def:align}
The \textbf{alignment loss} $\Lalign(f, \gP_\textnormal{pos})$ of the function $f$ over positive-pair distribution $\gP_\textnormal{pos}$ is defined as:
\begin{align}
	\Lalign(f; \gP_\textnormal{pos}) &\triangleq \E_{(x_1, x_2) \sim \gP_\text{pos}}\left[ \|f(x_1)-f(x_2)\|_2^2 \right], \label{eq:align}
\end{align}
\end{definition}
where the positive pair $(x_1, x_2)$ are two augmented views of the same input data $x\sim \gX$, i.e. $(x_1, x_2) = \left(t_1(x), t_2(x)\right)$ and $t_1 \sim \gT_1, t_2 \sim \gT_2$ are two augmentations. For the sake of simplicity, we omit $\gP_{\textnormal{pos}}$ and use $\Lalign(f)$ in the following content.


\begin{definition}[Uniformity loss]{} \label{def:uniform}
The \textbf{loss of uniformity} $\Luniform(f; \gX)$ of the encoder function $f$ over data distribution $\gX$ is defined as
\begin{align}
	\Luniform(f; \gX) &\triangleq \log \E_{(x, y) \sim \gX^2}\left[ e^{-t\|f(x)-f(y)\|_2^2} \right],
\end{align}
\end{definition}
where $t>0$ is a fixed parameter and is empirically set to $t=2$. To note here, the vectors in the representation space are automatically $l_2$-normalized, i.e. $f(x)\triangleq f(x)/\|f(x)\|_2$, as we limit the representation space to a hypersphere following \citet{wang2020understanding} and \cite{grill2020bootstrap} and the  representation vectors in the following context are also automatically $l_2$-normalized, unless specified otherwise. \citet{wang2020understanding} has empirically demonstrated that the balance of the alignment loss and the uniformity loss is necessary when learning representations through contrastive method. The rationale behind it is straightforward: $\gL_\text{align}$ provides the motive power that concentrates the similar data, and $\gL_\text{uniform}$ prevents it from mapping all the data to the same meaningless point. 

\subsection{BYOL: bizarre alternative of contrastive}
\label{subsec:byol-background}
A recently proposed self-supervised representation learning algorithm BYOL hugely challenges the common understanding that the alignment should be balanced by negative samples during the contrastive learning. It establishes two networks, online and target, approaching to each other during training. The online is trained to predict the target's representations and the target is the Exponential Moving Average~(EMA) of the parameters of the online. The loss of BYOL at every iteration could be written as
\begin{align}
\Lbyol \triangleq \E_{\substack{(x, t_1, t_2)\sim(\gX, \gT_1, \gT_2)}} \left[ \big\|q_w(f_\theta(t_1(x))) - f_\xi(t_2(x))\big\|_2^2 \right],\label{eq:Lbyol}
\end{align}

where two vectors in representation space are automatically $l_2$-normalized. $f_\theta$ is the online encoder network parameterized by $\theta$ and $q_w$ is the predictor network parameterized by $w$. $x \sim \gX$ is the input sampled from the data distribution $\gX$, and $t_1(x)$, $t_2(x)$ are two augmented views of $x$ where $t_1\sim \gT_1, t_2\sim \gT_2$ are two data augmentations. The target network $f_\xi$ is of the same architecture as $f_\theta$ and is updated by EMA with $\tau$ controlling to what degree the target network preserves its history
\begin{align}
\xi \leftarrow \tau \xi+(1-\tau) \theta. \label{eq:ema}
\end{align}

From the scheme of BYOL training, it seems like there is no constraint on the representation uniformity, and thus most frequently asked question about BYOL is how it prevents the representation collapse. Theoretically, we would expect that when the final convergence of the online and target is reached, $\mathcal{L}_{\text {BYOL}}$ degenerates to $\gL_\text{align}$ and therefore causes representation collapse, while this speculation never happens in reality. The motivation of understanding why and how BYOL avoids the representation collapse is the starting point of this paper. 

\subsection{Mean teacher}
\label{subsec:mt}
There is one type of semi-supervised learning method that BYOL constantly reminds people of, Mean Teacher~(MT)~\citep{tarvainen2017mean, laine2016temporal}. Like BYOL, MT is also of Teacher-Student~(T-S) framework, where the teacher network is also the EMA of the student network. The additional consistency loss between the teacher and student is applied to the supervised signals. There has been a lot of work demonstrating the efficacy of MT\citep{athiwaratkun2018improving, novak2018sensitivity, chaudhari2019entropy}, among which the major conclusion states that the consistency loss between the student and its MT acts as a regularizer for better generalization. 
The proven properties of MT might lead us to focus on how the online network's learning from MT effectively regularizes $\Lalign$ in BYOL. In this paper, however, our newly proposed method leverages the MT in an opposite way, which provides a novel perspective of estimating the efficacy of the MT in the self-supervised learning approaches.

\section{On-and-off BYOL: failure without predictor}
\label{sec:3}

\begin{figure}[t]
\begin{subfigure}[t]{0.33\textwidth}
	\centering
	\captionsetup{width=.8\linewidth}
   \includegraphics[width=1\linewidth]{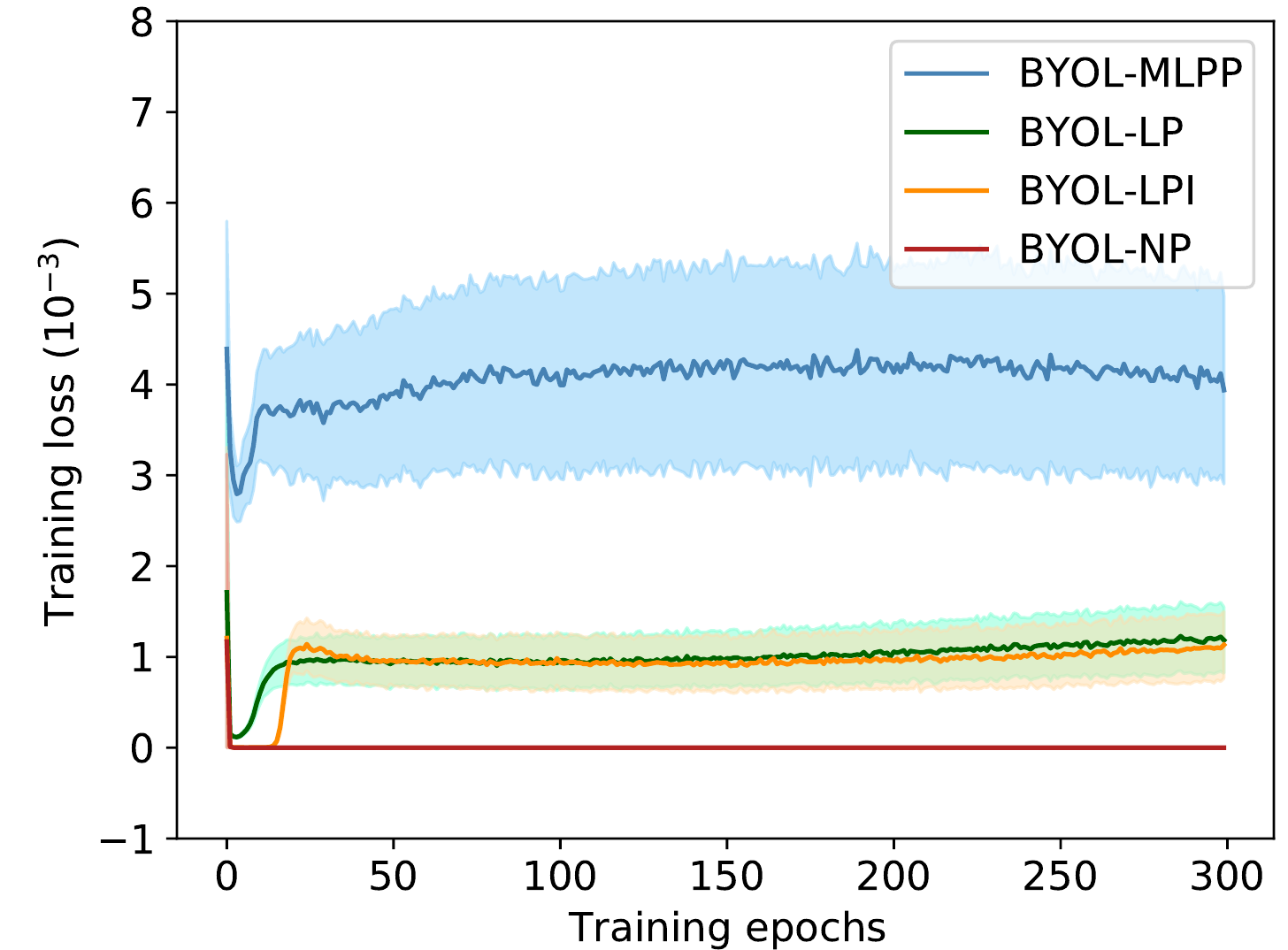}
   	\caption{}
   \label{fig:losses} 
\end{subfigure}
\begin{subfigure}[t]{0.33\textwidth}
	\centering
	\captionsetup{width=.8\linewidth}
   \includegraphics[width=1\linewidth]{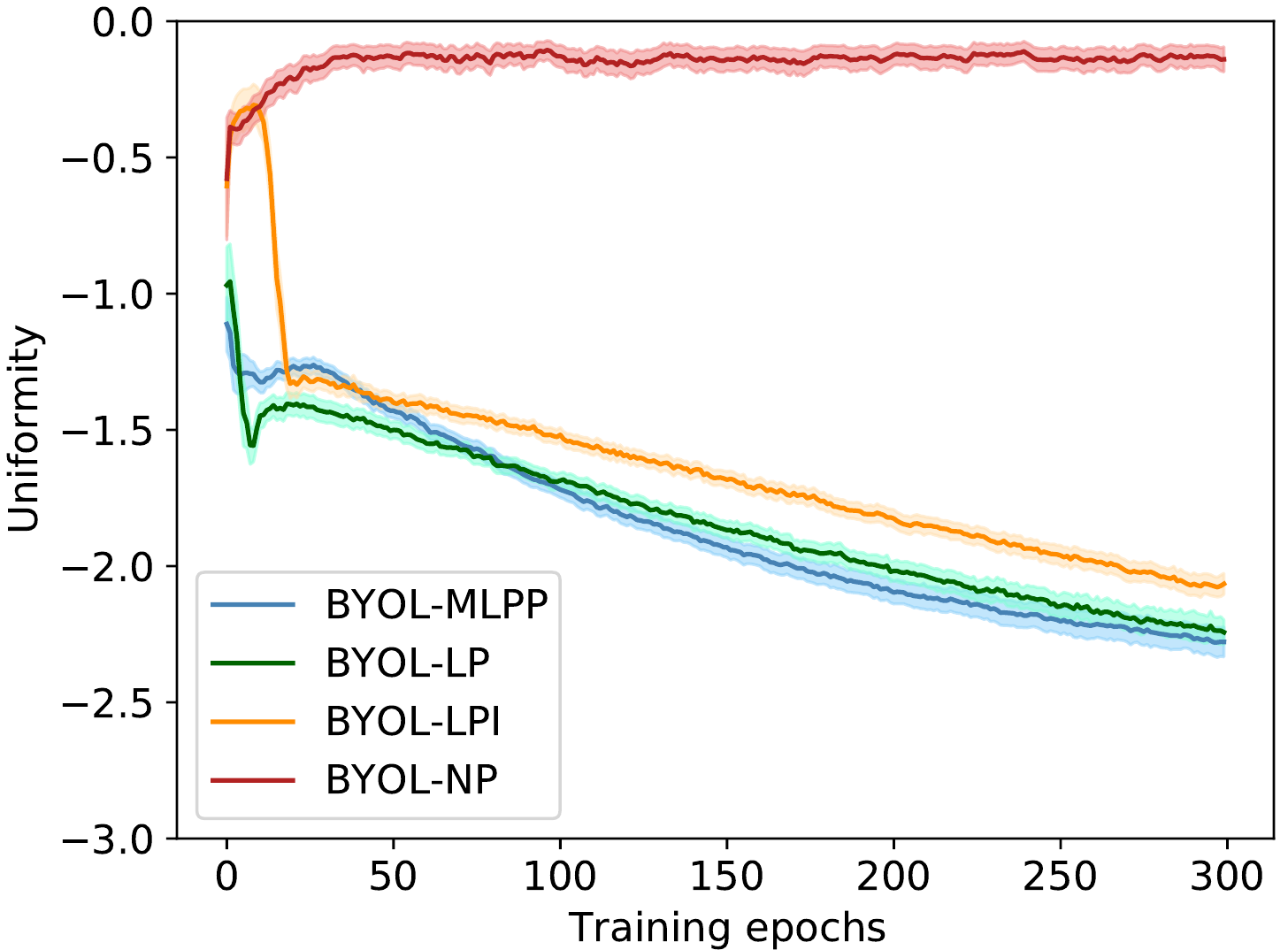}
   	\caption{}
   \label{fig:uniformities}
\end{subfigure}
\begin{subfigure}[t]{0.33\textwidth}
	\centering
	\captionsetup{width=.8\linewidth}
   \includegraphics[width=1\linewidth]{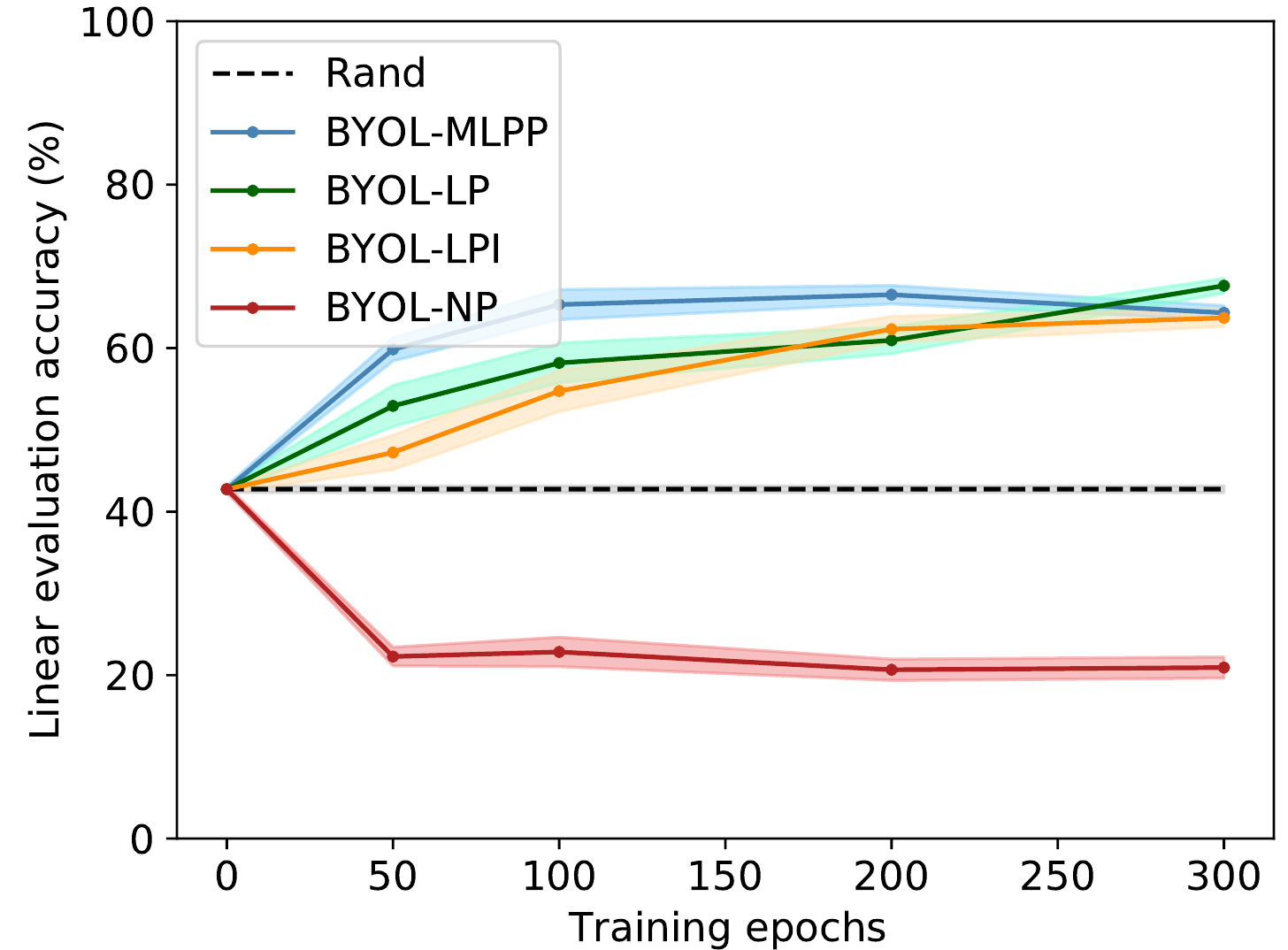}
   	\caption{}
   \label{fig:accuracy}
\end{subfigure}
\caption{
Training behaviors of BYOL with varying structures of the predictor. For detailed explanation on the model architecture, refer to Appendix~\ref{app:exp}. 
\textbf{(a)} Evolution of training loss $\Lbyol$. 
When taken out the predictor, the training loss quickly converges to 0 (red curve, BYOL-NP). 
Replacing the MLP-predictor~(blue curve, BYOL-MLPP) with the linear predictor~(green curve, BYOL-LP) will not cause the collapse even though $I$ is an apparent solution for collapse. 
Furthermore, initializing the linear predictor with $I$ forces the loss quickly approaching to 0 at beginning, while it recovers from the seemingly collapse after 10-20 epochs of training (orange curve, BYOL-LPI). 
\textbf{(b)} Evolution of the representation uniformity.
BYOL with predictor consistently optimizes the uniformity of the representation distribution even though the uniformity is not explicitly included in the loss term. 
One interesting fact to note here is that the uniformity loss is optimized with a constant rate with linear predictor (green curve, BYOL-LP; orange curve, BYOL-LPI) after certain phase of training.
\textbf{(c)} linear evaluation protocol on CIFAR10. Different structures of the predictor provide close performance on the downstream classification task.
}
\label{fig:train-metrics}
\end{figure}

We start by presenting a property of BYOL: its success heavily relies on the existence of the predictor $q_w$. The experimental setup of this paper is listed in Appendix~\ref{app:exp}. The performance of BYOL original model, whose predictor $q_w$ is a two-layer MLP with batch normalization, evaluated on the linear evaluation protocol~\citep{kolesnikov2019revisiting, kornblith2019better, chen2020simple, he2020momentum, grill2020bootstrap} reaches $68.08\pm0.84\%$. When the predictor is removed, the performance degenerates to $20.92\pm1.29\%$, which is even lower than the random baseline's $42.74\pm0.41\%$. 
We examine the speculation that the performance drop is caused by the representation collapse both visually~(refer to Appendix~\ref{app:d.1} for the qualitative illustration on the representation collapse) and numerically. Inspired by \citet{wang2020understanding}, we use $\Luniform(f_\theta; \gX)$ to evaluate to what degree the representations are spread on the hypersphere and $\Lalign(q_w \circ f_\theta)$ to evaluate how the similar samples are aligned in the representation space.
The results in Table~\ref{tab:1} show that with the predictor, BYOL optimizes the uniformity of the representation distribution. On the contrary, when taken away the predictor, the alignment of two augmented views is overly optimized and the uniformity of the representation deteriorates~(Figure~\ref{fig:train-metrics}), therefore we conclude the predictor is essential to the collapse prevention in BYOL. 

One reasonable follow-up explanation on the efficacy of the predictor may consider its specially designed architecture or some good properties brought by the weight initialization, which makes it hard to understand the mechanism behind it.  Fortunately, after replacing the current predictor, two-layer MLP with batch normalization~\citep{ioffe2015batch}, with different network architectures and weight initializations, we find that there is no significant change either on linear evaluation protocol or on the model behavior during training~(Table~\ref{tab:1} \& Figure~\ref{fig:train-metrics}). 
We first replace the complex structure with linear mapping $q_w(\cdot) = W(\cdot)$. This replacement provides a naive solution to representation collapse: $W=I$, while it never converges to this apparent collapse. Surprisingly enough when we initialize $W$ with the apparent collapse solution $I$, the model itself seems to have a self-recovering mechanism even though it starts off at a poor position: the loss quickly approaches to 0 and the uniformity deteriorates for 10-20 epochs and suddenly it deflects from the collapse and keeps on the right track. We conduct a theoretical proof that a randomly initialized linear predictor prevents the (more strict form of) representation collapse by creating infinite non-trivial solutions when the convergence is achieved~(refer to Appendix~\ref{app:non-trivial}), while we fail to correlate the consistently optimized uniformity with the presence of the predictor, which indicates that a deeper rationale needs to be found. 

There are a lot of methods and frameworks that are analogous to BYOL: self-distillation in the field of knowledge distillation~\citep{hinton2015distilling, furlanello2018born, mobahi2020self, zhang2020self}, Deep Q Network in reinforcement learning~\citep{mnih2013playing}, and Mean Teacher in semi-supervised learning~\citep{tarvainen2017mean}. The reason why BYOL's relying on the presence of the predictor is dissatisfactory lies in the fact that when the predictor is removed, BYOL remains to be of the Teacher-Student framework but it fails to work. This inconsistent behavior yet under the same theoretical framework drives us to explore the mechanism behind BYOL, and we further argue that any theoretical analysis on understanding BYOL shall solve the question how the predictor makes BYOL work.

\section{Run away from your teacher: a more effective regularizer}
\label{sec:4}
\subsection{Disentangle the BYOL loss by upper bounding}
\label{sec:4.1}
Analyzing $\Lbyol$ is hard, since it only has one single mean squared error term and there are many factors entangled within it, e.g., two augmented views of the same data, predictor, and the EMA updating rule. Inspired by the Bias-Variance decomposition on squared loss~\citep{geman1992neural}, we extract the alignment loss by subtracting and adding the same term $q_w(f_\theta(t_2(x)))$ and further yield the upper bound of $\Lbyol$. 

\begin{definition}[Cross-model loss]{} 
\label{def:cross}
The \textbf{cross-model loss} $\Lcross(f, g; \gX)$ of the function $f$ and $g$ over the data distribution $\gX$ is defined as
\begin{align}
	\Lcross (f, g; \gX) &\triangleq \E_{x \sim \gX}\left[\big\|f(x) - g(x)\big\|_2^2 \right].
\end{align}
\end{definition}

\begin{table}[t]
\caption{Evaluation results of BYOL variants on CIFAR10 after 300 epochs of training, used as evidence supporting the proposal of RAFT. $(\alpha, \beta) = (1, 1)$ are set in BYOL$^\prime$ and RAFT. $\Lalign(q_w \circ f_\theta)$ and $\Luniform(f_\theta)$ are evaluated by averaging the last 10 epochs of training. \textbf{We highlight the overly optimized $\Lalign$ and failed $\Luniform$ in this table.} For the accuracy on the linear evaluation protocol, we also \textbf{only highlight the ones that underperform the random baseline.}
}
\label{tab:1}
\begin{center}
\begin{tabular}{ll|ccc}
\toprule
\multicolumn{1}{l}{\textbf{Model}} &\multicolumn{1}{l}{$q_w$} &\multicolumn{1}{c}{$\Lalign(q_w\circ f_\theta$)} 	&\multicolumn{1}{c}{$\Luniform(f_\theta)$} &\multicolumn{1}{c}{Linear Evaluation Protocol(\%)}\\
\hline 
Rand-Baseline    	&$W$	    &$78.09\times 10^{-4}$		&$-0.51$		&$42.74\pm0.41$\\
\hline 
BYOL    			&MLP		&$25.03\times 10^{-4}$		&$-2.22$	&${64.32\pm0.89}$\\
BYOL$^\prime$   	&MLP		&$ 22.09\times 10^{-4}$		&$-2.07$	&${69.21\pm 1.01}$\\
RAFT                &MLP		&$18.90\times 10^{-4}$		&$-2.04$	&${71.31\pm 0.75}$\\
\hline 
BYOL-LP    			&$W$	    &$7.71\times 10^{-4}$		&$-2.16$	&${67.64\pm0.90}$\\
BYOL$^\prime$-LP    &$W$	    &$7.32\times 10^{-4}$		&$-2.19$	&${68.61\pm0.73}$\\
RAFT-LP				&$W$	    &$7.42\times 10^{-4}$		&$-2.23$	&${67.55\pm0.55}$\\

\hline 
	
BYOL-NP    			&$I$		&$\pmb{1.94\times 10^{-10}}$    &$\pmb{-0.14}$	&$\pmb{20.92\pm1.29}$\\
BYOL$^\prime$-NP    &$I$		&$\pmb{1.35\times 10^{-10}}$	&$\pmb{-0.10}$  &$\pmb{16.92\pm1.05}$\\
RAFT-NP				&$I$	    &$16.07\times 10^{-4}$		    &$\pmb{-0.006}$	&$\pmb{11.72\pm0.05}$\\
\hline

TanBYOL-LP    	    &$W$        &$7.83\times 10^{-4}$	    &$-1.92$	&${67.63\pm1.27}$ \\
TanBYOL$^\prime$-LP &$W$    	&$7.52\times 10^{-4}$	    &$-2.19$	&${67.66\pm0.76}$ \\
TanRAFT-LP      	&$W$    	&$7.61\times 10^{-4}$	    &$-2.13$	&${69.31\pm0.87}$ \\
\bottomrule

\end{tabular}
\end{center}
\end{table} 
\begin{definition}[BYOL$^\prime$ loss]{} \label{def:byol-upper}
The \textbf{BYOL$^\prime$ loss} $\Lbyolprime$ is defined as  
\begin{align}
	\Lbyolprime 
	&\triangleq  \alpha \Lalign(q_w \circ f_\theta; \gP_{\textnormal{pos}}) + \beta \Lcross(q_w \circ f_\theta, f_\xi; \gX_2) \label{eq:byolprime}
\end{align}
\end{definition}
where $\alpha, \beta > 0$ are constants, $\gP_{\textnormal{pos}}$ is defined in Eq.~\ref{eq:align} and $\gX_2 = \gT_2(\gX)$ is the distribution of the augmented data. 
For the sake of simplicity, we use $\Lalign(q_w \circ f_\theta)$ to denote $\Lalign(q_w \circ f_\theta; \gP_{\textnormal{pos}})$ in the following content. 
For the sake of symmetry, we use $\Lcross(q_w \circ f_\theta)$ to denote $(1/2)[\Lcross(q_w \circ f_\theta, f_\xi, \gX_1) + \Lcross(q_w \circ f_\theta, f_\xi, \gX_2)]$ to compute the cross-model loss. 

\newpage
\begin{theorem}[$\Lbyolprime$ is an upper bound of $\Lbyol$]

$\Lbyolprime$ is an upper bound of $\Lbyol$ if we ignore the scalar multiplication. Concretely speaking, for any given constants $\alpha, \beta > 0$, we have
\begin{align}
\Lbyol  \leq (\frac{1}{\alpha} + \frac{1}{\beta}) \Lbyolprime.
\end{align}
\end{theorem}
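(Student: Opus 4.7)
The plan is to introduce $q_w(f_\theta(t_2(x)))$ as a pivot inside the BYOL norm and then split the resulting sum via a single weighted quadratic inequality so that the two pieces are exactly $\alpha\Lalign(q_w\circ f_\theta)$ and $\beta\Lcross(q_w\circ f_\theta, f_\xi;\gX_2)$ up to a global factor of $(1/\alpha+1/\beta)$.

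First I would record the elementary lemma that for any $a,b\in\R^d$ and any $\alpha,\beta>0$,
\[
\|a+b\|_2^2 \;\leq\; \Bigl(\tfrac{1}{\alpha}+\tfrac{1}{\beta}\Bigr)\bigl(\alpha\|a\|_2^2 + \beta\|b\|_2^2\bigr).
\]
This follows from expanding the square and bounding the cross term by the weighted AM--GM inequality $2\|a\|\|b\|\le (\alpha/\beta)\|a\|^2 + (\beta/\alpha)\|b\|^2$; collecting terms yields coefficients $(\alpha+\beta)/\beta$ and $(\alpha+\beta)/\alpha$ in front of $\|a\|_2^2$ and $\|b\|_2^2$, which factor exactly as $(1/\alpha+1/\beta)(\alpha\|a\|_2^2+\beta\|b\|_2^2)$. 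This weighted inequality is the only non-trivial algebraic input, and its specific constants are precisely what make the prefactor match the one in the theorem's statement.

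I would then apply the lemma pointwise inside $\Lbyol$ by choosing $a \triangleq q_w(f_\theta(t_1(x))) - q_w(f_\theta(t_2(x)))$ and $b \triangleq q_w(f_\theta(t_2(x))) - f_\xi(t_2(x))$, so that $a+b$ is exactly the vector whose squared norm appears in Eq.~\ref{eq:Lbyol}. Taking expectation over $(x,t_1,t_2)\sim(\gX,\gT_1,\gT_2)$ and using linearity, $\E[\|a\|_2^2]$ collapses to $\Lalign(q_w\circ f_\theta;\gP_{\textnormal{pos}})$ by Definition~\ref{def:align}, and $\E[\|b\|_2^2]$ collapses to $\Lcross(q_w\circ f_\theta, f_\xi;\gX_2)$ by Definition~\ref{def:cross}.

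The only remaining bookkeeping is reconciling the one-sided $\Lcross$ thus produced with the symmetrised form adopted in Definition~\ref{def:byol-upper}. I would handle this by treating $\Lbyol$ as the average of its $(t_1,t_2)$ and $(t_2,t_1)$ orientations, which is the symmetrisation that BYOL itself uses at training time, repeat the pivot argument with the roles of $t_1$ and $t_2$ exchanged and pivot $q_w(f_\theta(t_1(x)))$, and average the two bounds: the alignment term is symmetric in $(t_1,t_2)$ and stays put, while the two one-sided cross-model terms combine into $\tfrac12[\Lcross(\cdot;\gX_1)+\Lcross(\cdot;\gX_2)]$, matching the symmetric $\Lcross$ of the definition. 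The hardest step, in my view, is identifying the weighted quadratic inequality with the exact pair of constants that delivers the $(1/\alpha+1/\beta)$ factor without slack; once that is in hand, the rest is mechanical pushing of expectations through and matching to Definitions~\ref{def:align} and \ref{def:cross}.
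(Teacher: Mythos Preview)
Your proposal is correct and follows essentially the same route as the paper: insert the pivot $q_w(f_\theta(t_2(x)))$, apply the weighted quadratic inequality $\|a+b\|_2^2\le(1+1/\lambda)\|a\|_2^2+(1+\lambda)\|b\|_2^2$ (which the paper labels ``Cauchy--Schwarz'' and parameterises by $\lambda$, then sets $\lambda=\beta/\alpha$; you parameterise directly by $\alpha,\beta$ via AM--GM), and identify the two expectations with $\Lalign$ and $\Lcross$. Your extra symmetrisation paragraph is careful bookkeeping the paper does not spell out in its proof, since Definition~\ref{def:byol-upper} is stated with the one-sided $\Lcross(\cdot;\gX_2)$ and the symmetric version is only a subsequent notational convention.
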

\begin{proof}
Please refer to Appendix~\ref{app:upperbound-byol}.
\end{proof}

Ideally, minimizing $\Lbyolprime$ would yield similar performance as minimizing $\Lbyol$. We exemplify the legitimacy of $\Lbyolprime$ by setting $(\alpha, \beta) = (1,1)$. In Table~\ref{tab:1}, the performance of BYOL and BYOL$^\prime$ are close to each other with respect to three metrics: alignment, uniformity, and downstream linear evaluation protocol, regardless of the form of predictors. When the predictor is linear mapping, the performance differences between them are subtle. Besides, when the predictor is removed, the representation collapse also happens to BYOL$^\prime$. So we conclude that optimizing $\Lbyolprime$ is almost equivalent to $\Lbyol$, while we leave the question that under what condition this approximate equivalence holds to the future. In spite of the performance similarity, $\Lbyolprime$ is of a more disentangled form than $\Lbyol$ and therefore we focus on studying the former instead of the latter. 

The new objective consists of two terms: the first term $\Lalign$ minimizes the representation distance between samples from a positive pair and has already been shown crucial to the successful contrastive methods~\citep{wang2020understanding}. Intuitively, it provides the motive power to concentrate similar data in the representation space. Based on the form of BYOL$^\prime$ and the study on the alignment-uniformity framework, we conclude that MT is used to regularize the alignment loss. This perspective of two terms regularizing each other is crucial to our analysis and improvement of the original BYOL framework. Understanding why BYOL works without collapse is approximately equivalent to understanding how minimizing $\Lcross(q_w \circ f_\theta, f_\xi)$ effectively regularizes the alignment loss, or even actively optimizes the uniformity.

\subsection{RAFT: run away from your teacher}\label{sec:4.2}
\begin{figure}
	\centering
	\captionsetup{width=1\linewidth}
   \includegraphics[width=1\linewidth]{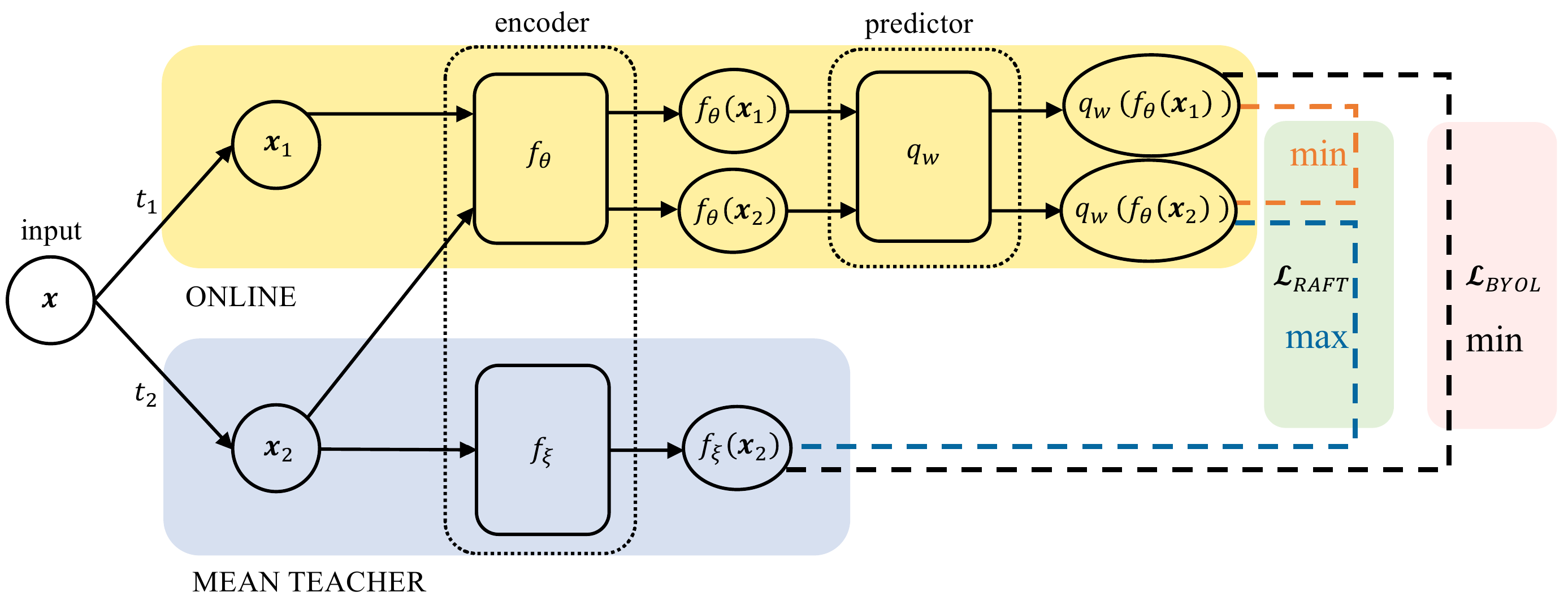}
   	\caption{
   	Framework diagram of RAFT and BYOL. 
    The online network is composed of an encoder $f_\theta$ and an extra predictor $q_w$. The Mean Teacher $f_\xi$ is the EMA of the encoder $f_\theta$. 
   	In BYOL, the loss is computed by minimizing the distance between the prediction of one view $x_1$ and another view $x_2$'s representation generated by the MT.
   	In RAFT, we optimize two objectives together: (i) minimize the representation distance between two samples from a positive pair and (ii) maximize the representation distance between the online network and its MT.
   	}
   \label{fig:raft} 
\end{figure}

The major difficulty of correlating $\Lcross$ with $\Luniform$ is that their optimization intentions are not only irrelevant, but somewhat opposite. Minimizing the cross-model loss asks the network to produce close representations for certain inputs, while optimizing the uniformity loss requires it to produce varying representations. The disparity residing in the form pushes us to question the original motivation of BYOL: do we really want the online network to approach to the Mean Teacher? To test our suspicion, we minimize $[\Lalign(q_w \circ f_\theta) - \Lcross(q_w \circ f_\theta, f_\xi)]$ instead of $[\Lalign(q_w \circ f_\theta) + \Lcross(q_w \circ f_\theta, f_\xi)]$, and we find it works as well. 
This bizarre phenomenon will be explained in Section~\ref{sec:5}. Removing the predictor, we observe that although minimizing $[\Lalign(f_\theta) - \Lcross(f_\theta, f_\xi)]$ fails to yield better representation than the random baseline, it prevents the overly-optimized alignment loss, i.e. it works as an effective regularizer for the alignment loss, while minimizing $\Lcross(f_\theta, f_\xi)$ does not.


Based on the conclusion above and law of Occam's Razor, we propose a new self-supervised learning framework, Run Away From your Teacher~(RAFT), which optimizes two learning objectives simultaneously: (i) minimize the alignment loss of two samples from a positive pair and (ii) maximize the distance between the online network and its MT~(refer to Figure~\ref{fig:raft} and Algorithm~\ref{alg:raft}). 

\begin{figure}[t]
	\centering
	\captionsetup{width=1\linewidth}
   \includegraphics[width=0.5\linewidth]{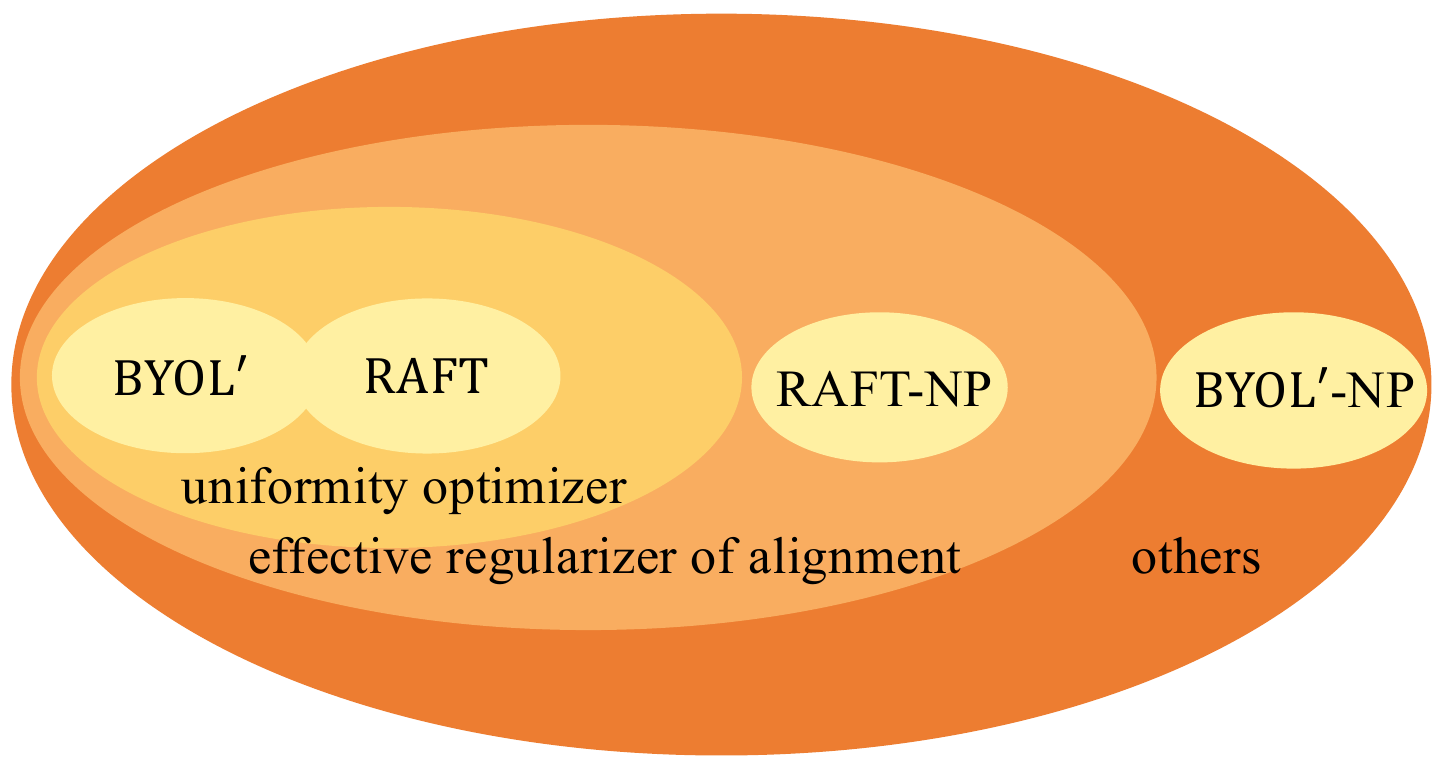}
   	\caption{
   	Objective categories diagram with respect to the effect constraining the alignment loss. In contrastive methods, the most favorable objective actively optimizes uniformity, including both BYOL$^\prime$ and RAFT with predictor. The secondly favorable objective is the effective regularizer of alignment. RAFT remains to restrain alignment loss without predictor, while BYOL fails to do so.
   	}
   \label{fig:venn} 
\end{figure}

\begin{definition}[RAFT loss]{} \label{def:raft}
The \textbf{RAFT loss} $\Lraft$ is defined as  
\begin{align}
	\Lraft 
	&\triangleq  \alpha \Lalign(q_w \circ f_\theta; \gP_{\textnormal{pos}}) - \beta \Lcross(q_w \circ f_\theta, f_\xi; \gX_2) \label{eq:byolprime},
\end{align}
\end{definition}
where $\alpha, \beta > 0$ are constants and other components follow the  Definition~\ref{def:byol-upper}.

Compared to BYOL and BYOL$^\prime$, RAFT better conforms to our knowledge and is a conceptually non-collapsing algorithm. There has been a lot of work demonstrating that weight averaging is roughly equal to sample averaging~\citep{tarvainen2017mean}, thus if two samples' representations are close to each other at the beginning and their initial updating directions are opposite, then RAFT consistently separates them in the representation space. All the forms of loss terms could be classified into three categories: uniformity optimizer, effective regularizer for alignment loss, and others~(Figure~\ref{fig:venn}). According to our experiments, when the predictor is removed, running away from MT remains an effective regularizer for the alignment loss while BYOL's running towards MT fails to do so, thus RAFT is of more unified and consistent form. 
In summary, our proposed learning framework RAFT is completely based on the intention of solving the inconsistency of the predictor in BYOL, and it's better than BYOL in threefold:

\begin{itemize}
	\item Consistency. Compared to BYOL, our newly proposed method has an effective regularizer for the alignment loss regardless of the presence of predictor. 
	\item Interpretability. Mean teacher uses the technique of weight averaging and thus could be considered as an approximate ensemble of the previous versions of the model. Running away from the mean teacher intuitively encourages the diversity of the representation, which is positively correlated to the uniformity.
	\item Disentanglement. The learning objective is decoupled into aligning two augmented views and running away from the mean teacher, and hence could be independently studied. 
\end{itemize}

\begin{figure}[t]
\begin{subfigure}[t]{0.33\textwidth}
	\centering
	\captionsetup{width=.8\linewidth}
   \includegraphics[width=1\linewidth]{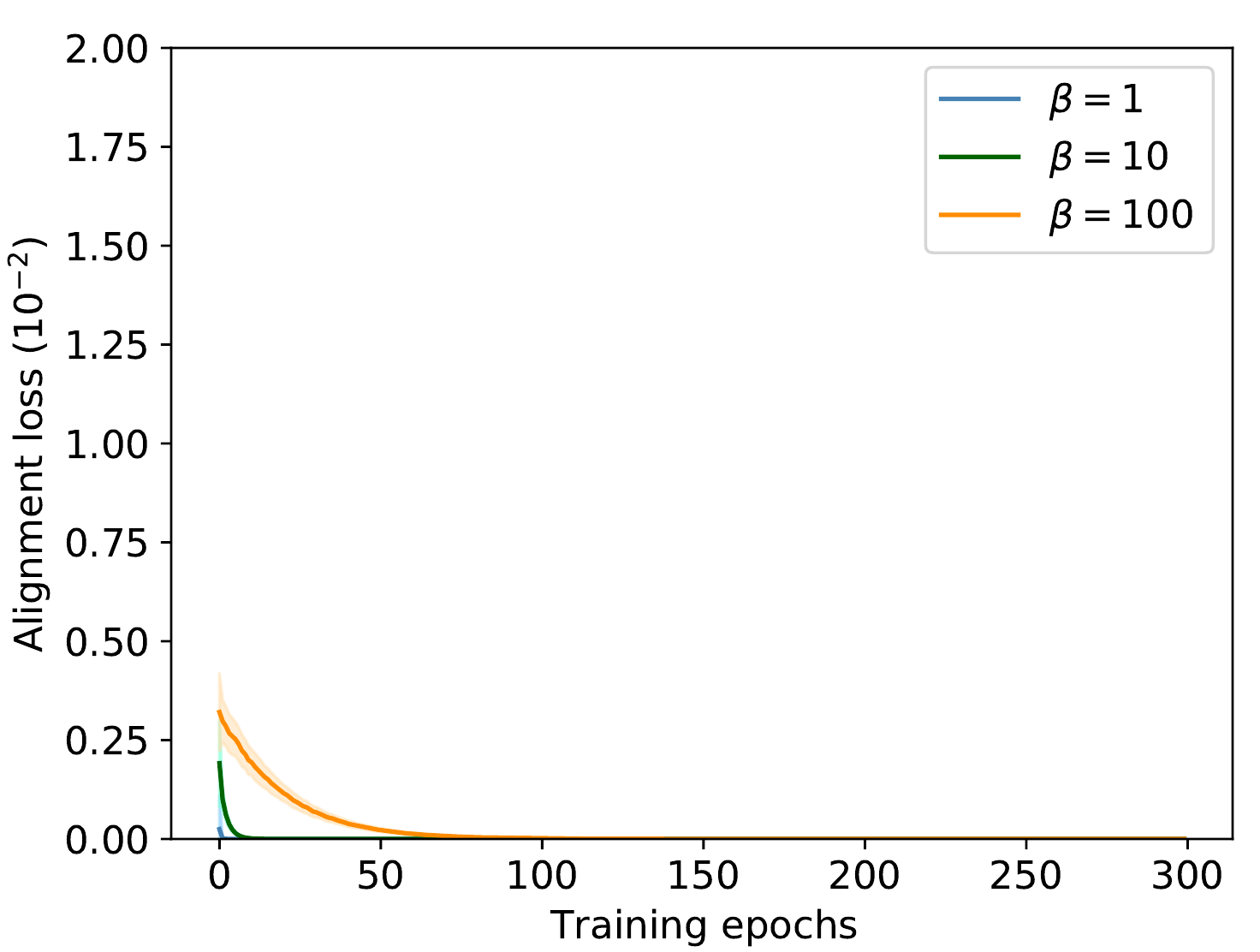} 
\end{subfigure}
\begin{subfigure}[t]{0.33\textwidth}
	\centering
	\captionsetup{width=.8\linewidth}
   	\includegraphics[width=1\linewidth]{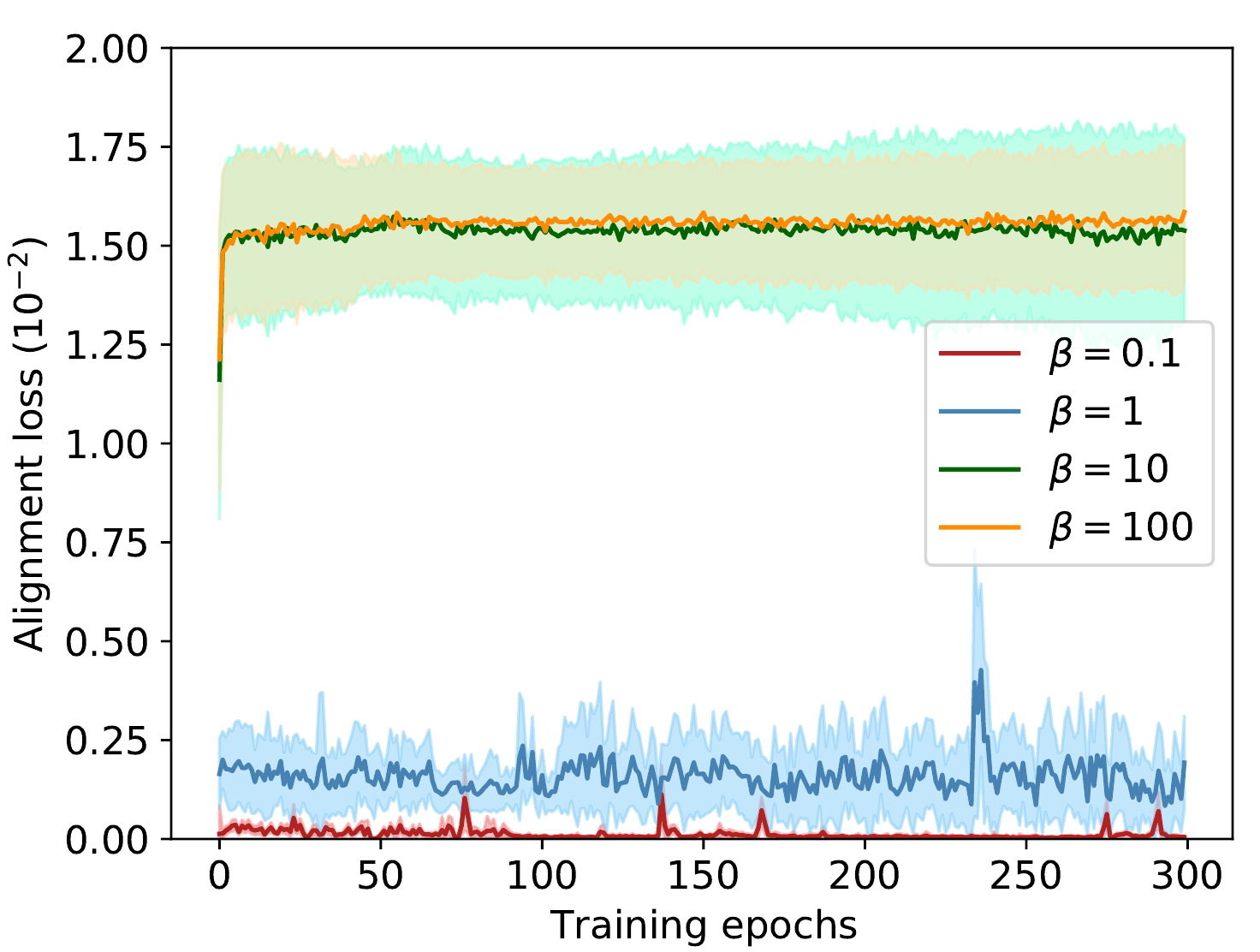}
\end{subfigure}
\begin{subfigure}[t]{0.33\textwidth}
	\centering
	\captionsetup{width=.8\linewidth}
   	\includegraphics[width=1\linewidth]{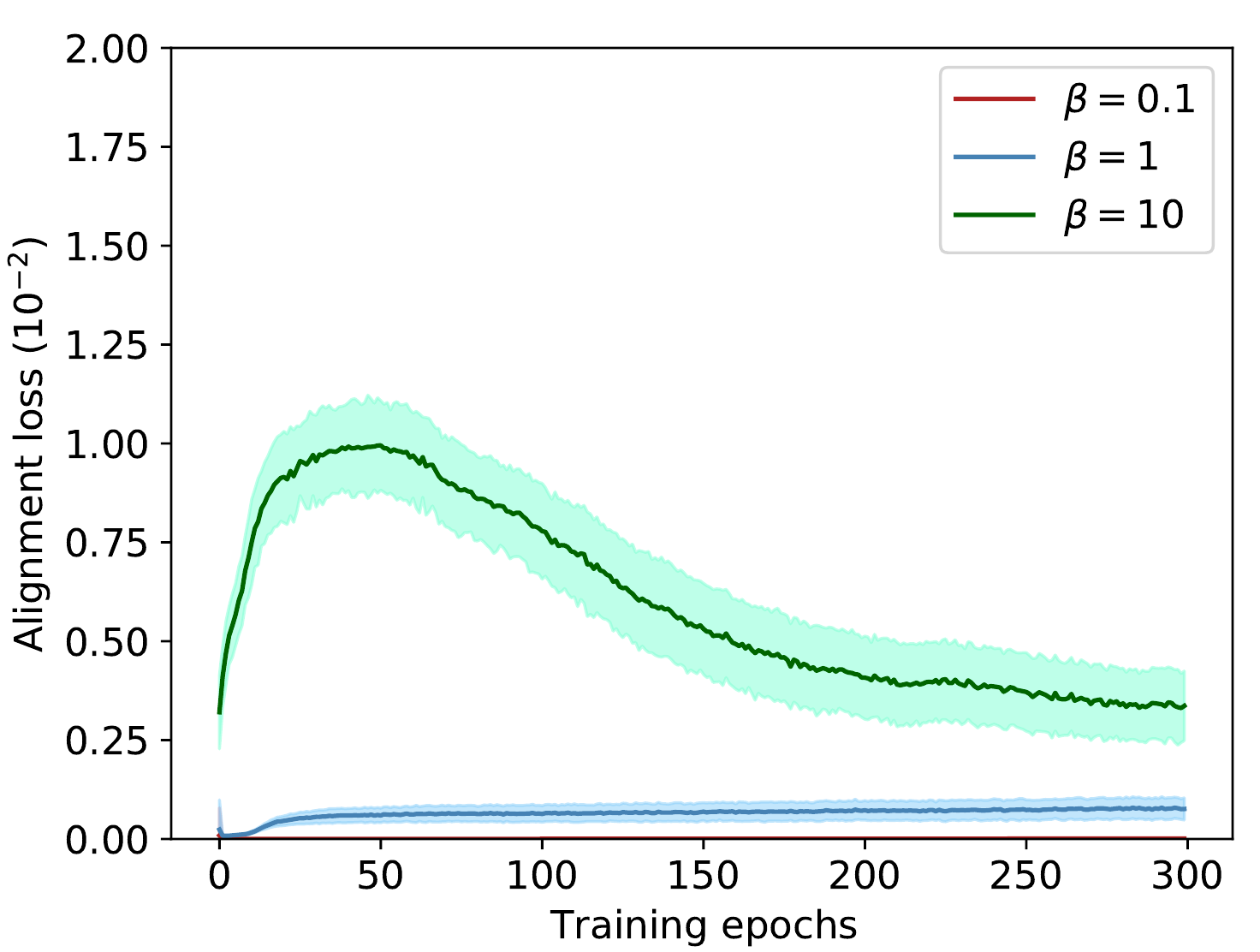}
\end{subfigure}
\begin{subfigure}[t]{0.33\textwidth}
	\centering
	\captionsetup{width=.8\linewidth}
   	\includegraphics[width=1\linewidth]{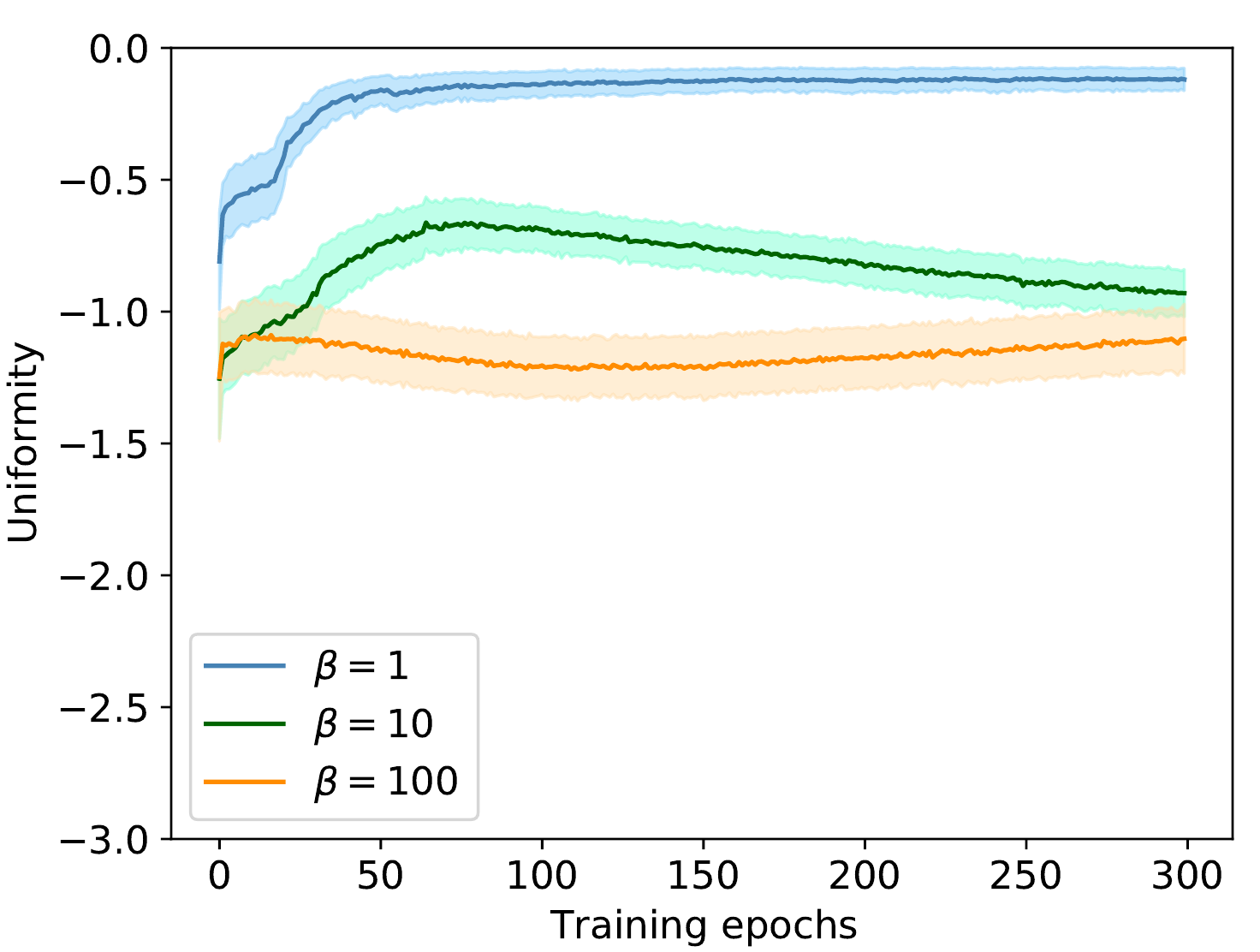} 
   	\caption{}
\end{subfigure}
\begin{subfigure}[t]{0.33\textwidth}
	\centering
	\captionsetup{width=.8\linewidth}
   	\includegraphics[width=1\linewidth]{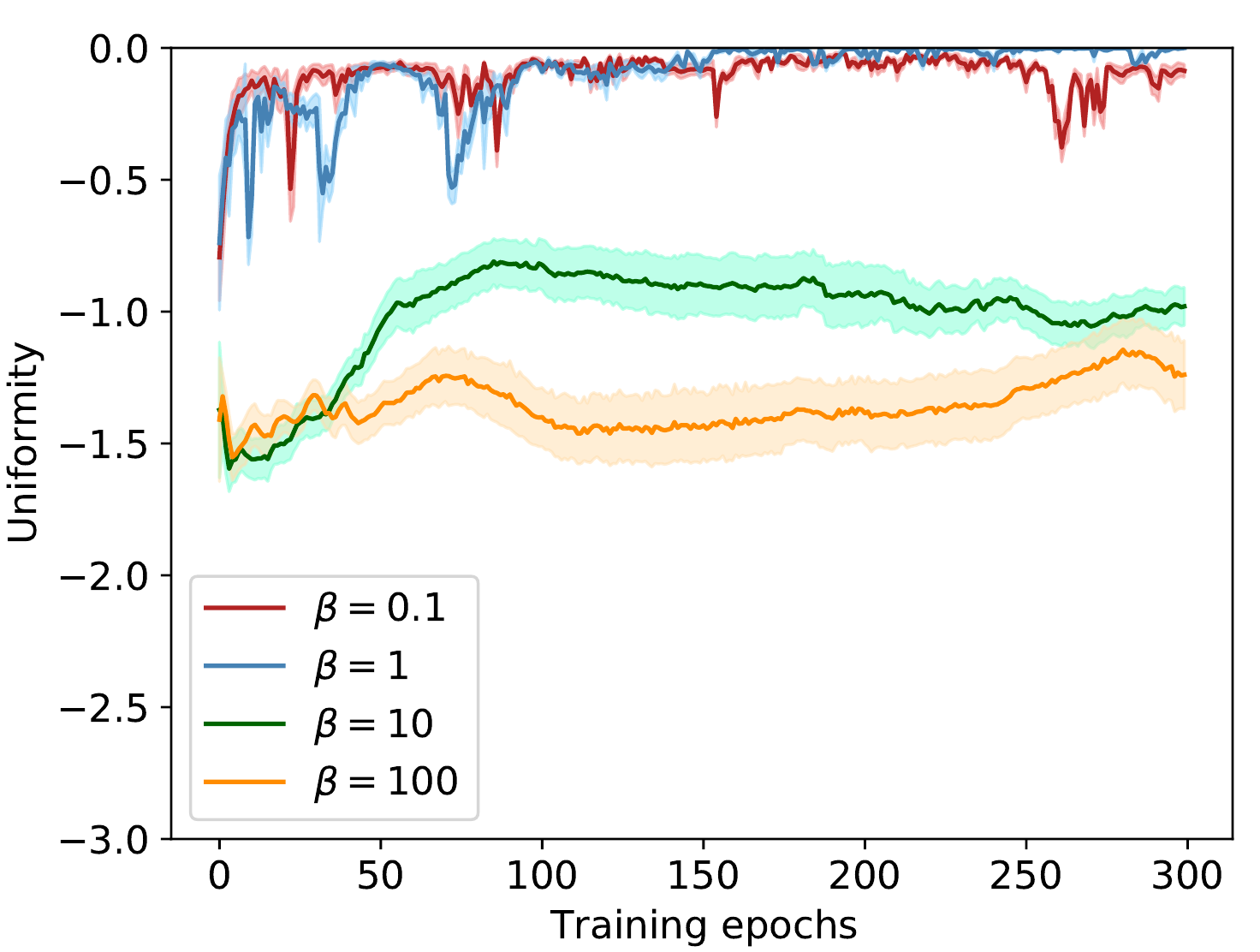}
    \caption{}
\end{subfigure}
\begin{subfigure}[t]{0.33\textwidth}
	\centering
	\captionsetup{width=.8\linewidth}
   	\includegraphics[width=1\linewidth]{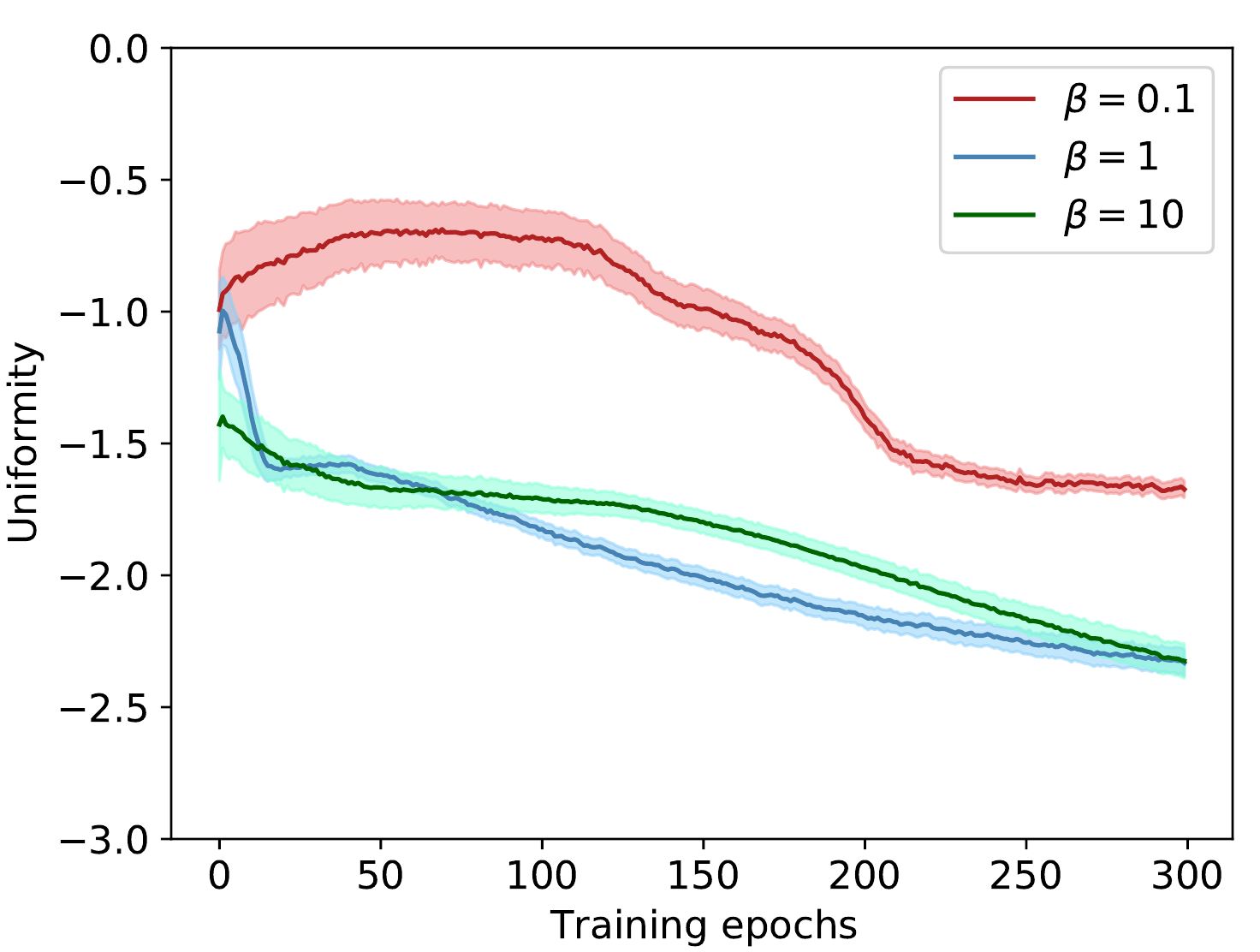}
    \caption{}
\end{subfigure}

\caption{
The evolution traces of $\Lalign(q_w\circ f_\theta, f_\xi)$ and $\Luniform(f_\theta)$ in BYOL$^\prime$ and our proposed RAFT. 
\textbf{(a)} Evolution trace of BYOL$^\prime$-NP. Increasing $\beta$~(weight of $\Lcross$, regularizer for $\Lalign$) does not prevent the failed regularization: $\Lalign$ converges to 0 quickly. 
\textbf{(b)} Evolution trace of  RAFT-NP. Small value of $\beta$ doesn't effectively regularize $\Lalign$, but increasing the weight helps. In this respect, RAFT is a more effective regularizer, while the uniformity optimization holds no huge difference from BYOL$^\prime$-NP. 
\textbf{(c)} Evolution trace of  RAFT-LP. With the linear predictor, on the contrary to RAFT-NP, the uniformity is optimized consistently during training, which implies deeper rationale of the existence of predictor. 
}
\label{fig:raft-train-metrics}
\end{figure}
We will discuss the relationship between RAFT and BYOL$^\prime$ in the next section, and we find BYOL$^\prime$ is a special form of RAFT under certain conditions, which makes the performance of BYOL$^\prime$ a guarantee of the effectiveness of RAFT. We provide benchmarks of alignment, uniformity, and downstream linear evaluation performance on CIFAR10~(Table~\ref{tab:2}). We discover that balancing the alignment loss and the cross-model loss is not an easy job with the predictor taken away. The imbalance between the alignment loss and the cross-model loss would lead to representation collapse or over-regularized alignment where every data is randomly projected. 
One interesting research direction is to study the efficacy of the predictor. The reason why it helps the two terms to achieve an equilibrium is left to be answered.

\section{Understanding BYOL via RAFT}
\label{sec:5}
In Section~\ref{sec:4.1} we derive an upper bound $\Lbyolprime$ of $\gL_\text{BYOL}$ and explicitly extract two terms $\gL_{\text{align}}$ and $\gL_{\text{cross-model}}$. In BYOL$^\prime$, two terms are simultaneously minimized, while in RAFT, we minimize $\gL_{\text{align}}$ but maximize $\gL_{\text{cross-model}}$ instead. To clearly distinguish the difference between the two objectives, we rewrite them as following:
\begin{align}
    \Lbyolprime &=  \alpha  \Lalign(q_w \circ f_\theta) + \beta  \Lcross(q_w \circ f_\theta, f_\xi),\\
    \Lraft &=  \alpha  \Lalign(q_w \circ f_\theta) - \beta  \Lcross(q_w \circ f_\theta, f_\xi),
\end{align}
where $\alpha, \beta > 0$ are constants.

In form, $\gL_{\text{RAFT}}$ and $\Lbyolprime$ seem to evolve in opposite optimizing direction on the second term, but the empirical study has shown that both of them work. How can two opposite optimization goals produce similar effect? Since RAFT is a conceptually working method, we analyze the mechanism of BYOL$^\prime$ by establishing the equivalence between the parameters of BYOL$^\prime$ and RAFT under mild conditions. 

\begin{theorem}[One-to-one correspondence between BYOL$^\prime$ and RAFT]
\label{th:one-to-one}
There is a one-to-one correspondence between parameter trajectories of BYOL$^\prime$ and RAFT when the following three conditions hold:
\begin{itemize}
    \item[i.\space\space] the representation space is a hypersphere;
    \item[ii.\space] the predictor is a linear transformation, i.e. $q_w(\cdot)=W(\cdot)$;
    \item[iii.] only the tangential component of the gradient on the hypersphere is preserved.
\end{itemize}
\end{theorem}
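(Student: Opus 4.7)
The plan is to exhibit an involution on parameter space that turns BYOL$^\prime$ into RAFT, and then to promote this static symmetry to a correspondence between SGD trajectories. The key observation is the hypersphere identity
\[
\|u-v\|_2^2 + \|u+v\|_2^2 = 2\|u\|_2^2 + 2\|v\|_2^2 = 4
\]
for any two unit vectors $u,v$, together with the fact that for a linear predictor $q_W(\cdot)=W(\cdot)$ post-composed with $l_2$-normalization, the substitution $W\mapsto -W$ simply negates the normalized predictor output, i.e.\ $q_{-W}(z)=-q_W(z)$.

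First I would verify that the involution $\Phi:(W,\theta,\xi)\mapsto(-W,\theta,\xi)$ carries $\Lbyolprime$ to $\Lraft$ up to an additive constant. Since $\Lalign$ depends only on squared differences of two predictor outputs, the common sign flip cancels and $\Lalign(q_{-W}\circ f_\theta)=\Lalign(q_W\circ f_\theta)$. Applying the hypersphere identity term-wise gives $\Lcross(q_{-W}\circ f_\theta,f_\xi)=4-\Lcross(q_W\circ f_\theta,f_\xi)$, so
\[
\Lbyolprime(-W,\theta,\xi)=\Lraft(W,\theta,\xi)+4\beta.
\]
Differentiating this identity yields $\nabla_\theta\Lbyolprime|_{\Phi(W,\theta,\xi)}=\nabla_\theta\Lraft|_{(W,\theta,\xi)}$ and $\nabla_W\Lbyolprime|_{\Phi(W,\theta,\xi)}=-\nabla_W\Lraft|_{(W,\theta,\xi)}$, where the extra sign on the $W$-gradient is the chain-rule factor from the reflection $W\mapsto-W$. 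A one-step SGD calculation then shows that if a BYOL$^\prime$ iterate $(W_B^{(t)},\theta^{(t)},\xi^{(t)})$ is paired with the RAFT iterate $\Phi(W_B^{(t)},\theta^{(t)},\xi^{(t)})$, the next iterates are again paired by $\Phi$; the EMA update $\xi\leftarrow\tau\xi+(1-\tau)\theta$ touches only the $\Phi$-invariant coordinates and therefore commutes with $\Phi$ trivially. Iterating gives the claimed bijection of parameter trajectories at every training step.

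Finally, condition (iii) is the non-trivial hypothesis that lets the loss-level identity lift to the gradient level that actually drives training: the ambient Euclidean gradient back-propagated through the $l_2$-normalization carries a radial component whose behavior under $W\mapsto-W$ introduces spurious terms, and restricting to the tangential (Riemannian) gradient on the hypersphere is exactly what makes the two parameter-update maps conjugate under $\Phi$. I expect the main obstacle to live precisely here. Conditions (i) and (ii) enter cleanly as algebraic facts about the sphere and the linear predictor, but verifying the $\Phi$-equivariance of the realized parameter updates requires careful chain-rule bookkeeping through the normalization operators, and this is the step where condition (iii) must be invoked to discard the symmetry-breaking radial contributions.
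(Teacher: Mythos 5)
Your proof is correct, but it reaches the correspondence by a genuinely different route than the paper. The paper works entirely at the gradient level: it pairs the initializations $(\theta^{(0)},W^{(0)})$ and $(\theta^{(0)},-W^{(0)})$, uses $z'=-z$ to decompose each difference vector (e.g.\ $z_1-z_2$, $\bar z_2-z_2$) into tangential and normal components at the relevant point of the sphere, and verifies term by term that the tangential gradients satisfy $\bigl[\partial_\theta\Lbyolprime\bigr]_\parallel=\bigl[\partial_\theta\Lraft\bigr]_\parallel$ and $\bigl[\partial_W\Lbyolprime\bigr]_\parallel=-\bigl[\partial_W\Lraft\bigr]_\parallel$, then inducts over the update steps; condition (iii) is invoked precisely to discard the normal components, because the paper differentiates the un-normalized map $z=Wf_\theta(x)$ while treating $z$ as a point on the sphere. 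You instead establish a single exact loss-level symmetry: $\Lalign$ is invariant under $W\mapsto -W$ and, by the parallelogram law on unit vectors, $\Lcross(q_{-W}\circ f_\theta,f_\xi)=4-\Lcross(q_W\circ f_\theta,f_\xi)$, giving $\Lbyolprime\circ\Phi=\Lraft+4\beta$ with $\Phi:(W,\theta,\xi)\mapsto(-W,\theta,\xi)$; one differentiation then yields the same gradient conjugacy (equal in $\theta$, sign-flipped in $W$ from the reflection), and your induction over the SGD step plus the $\Phi$-invariant EMA update closes the argument, with the base case supplied by pairing the initializations through $\Phi$. What each approach buys: yours is shorter, treats the $l_2$-normalization consistently inside the loss, and makes transparent both where the additive constant goes and which parameter block flips sign; the paper's component-wise computation exposes exactly which tangential force vectors coincide or negate, which is what its ``Tan-'' ablations probe. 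One small remark on your closing paragraph: given that you compose the linear predictor with normalization, your identity $\Lbyolprime(-W,\theta,\xi)=\Lraft(W,\theta,\xi)+4\beta$ is exact as a function of the parameters, so the full ambient gradients already agree after differentiation and no radial term ever needs to be discarded---the normalization Jacobian is automatically tangential since $\|z\|\equiv 1$ along parameter variations. Condition (iii) is therefore automatic in your formulation rather than the crux; it is genuinely needed only in the paper's bookkeeping, where gradients are propagated through the un-normalized $z=Wf_\theta(x)$, so your worry that the main obstacle lives there is overstated rather than a gap.
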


\begin{proof} We prove the theorem by construction. For the detail, please refer to Appendix~\ref{app:one-to-one}.
\end{proof}
\textbf{Remark} The third condition conforms to the property of the hypersphere representation space and is easy to achieve. One can preserve only the tangential gradient by slightly modifying the loss. For example, suppose the representation of the MT is $\bar{z}$ and the representation of the input is $z$ which are both normalized, the cross-model loss $\big\| \bar{z}- z \big\|_2^2$ can revised as $\big\| \bar{z}-\lambda z \big\|_2^2 / \lambda$, where $\lambda = \operatorname{sg}(\left<z, \bar{z}\right>)$ stands for stopping gradient of the inner product of $z$ and $\bar{z}$. Our experiments in Table~\ref{tab:1} demonstrates that the condition of the tangential component of the gradient doesn't turn any of the algorithms including BYOL, BYOL$^\prime$ and RAFT into a collapsed one.

In Theorem~\ref{th:one-to-one}, we show that optimizing $\Lbyolprime$ with initial parameters $(\theta^{(0)},W^{(0)})$ is equivalent to optimizing $\Lraft$ with initial parameters $(\theta^{(0)},-W^{(0)})$ when the aforementioned three conditions are satisfied. This equivalence demonstrates that the final encoder network $f_\theta$ and $f_{\theta^\prime}$  equal to each other. Therefore we conclude that, as representation learning framework, BYOL$^\prime$ is equivalent to our newly proposed RAFT.
From a geometric point of view, the optimization process is the data points moving in the representation space under the guidance of the training loss. The loss function measures the potential energy of the parameters, and the gradient with regard to the data points is the motive force. If the representation space is a hypersphere as in BYOL, then the tangential force, i.e. the tangential component of the gradient, is the only key to scattering or concentrating the data points in the representation space. 
By the central symmetry of the hypersphere, clockwise and counterclockwise moving directions are equivalent to some extent, for example, pushing a point by $\pi/2$ and pulling it by $\pi/2$ on the $2$-dimensional sphere causes the same effect.

The equivalence between BYOL$^\prime$ and RAFT offers us a direct way to understand some strange phenomena we observe which are also reported in the original BYOL paper.
Firstly, the non-collapse of BYOL is explained, since the RAFT is an intuitively and practically working algorithm. The equivalence of BYOL$^\prime$ and RAFT when predictor is linear helps us understand why BYOL is an effective self-supervised learning algorithm. It also explains our initial question why BYOL fails to avoid representation collapse without the predictor: removing the predictor means fixing $W=I$, which breaks the RAFT's designing principle of running away from the MT. 
Secondly, though the BYOL's optimization procedure is of the form that two models approaching to each other, there has been no report of convergence in the original paper. The established equivalence perfectly explains it. RAFT incorporates the MT in an extremely dynamic way since it continuously varies from the history models, thus there would be no convergence of the data points. So does the parameters. 
\section{Conclusion and future work}
In this paper, we address the problem of why the newly proposed self-supervised learning framework Bootstrap Your Own Latent~(BYOL) works without negative samples. By decomposing, upper bounding and approximating the original loss of BYOL, we establish another interpretable self-supervised learning method, Run Away From your Teacher. We show that RAFT contains an explicit term that prevents the representation collapse and we also empirically validate the effectiveness of RAFT. By constructing a one-to-one correspondence from RAFT to BYOL$^\prime$~(variant of BYOL), we successfully explain the mechanism behind BYOL that makes it work and therefore implies the huge potential of our proposed RAFT. Based on the observation and the conclusion, here we have several suggestions for future work:

\textbf{Theoretical guarantees of RAFT.}\quad Though we have intuitively explained why running away from the MT is an effective regularizer, we don't provide theoretical guarantees why optimizing RAFT would be favorable with respect to the representation learning. In future, one can try to relate RAFT to the theory of Mutual Information~(MI) maximization~\citep{belghazi2018mutual, hjelm2018learning, tschannen2019mutual}, as the training objective of contrastive learning InfoNCE has been proven to be a lower bound of the MI~\citep{poole2019variational}. One detail should be noticed when attempting to correlate RAFT with MI maximization. Even though RAFT is an effective regularizer, it fails to yield good-quality representations when the predictor is removed, thus any theoretical proof on the effectiveness of RAFT should well explain the mechanism behind this extra predictor.

\textbf{On the efficacy of the predictor.}\quad It has become a popular and almost standardized method to add an extra MLP on top of the network in contrastive learning methods~\citep{chen2020simple, chen2020improved, grill2020bootstrap}, while most of the work adopts this method as a special trick without considering the effect this MLP brings to the algorithm. In this paper, however, we find that this extra MLP may bring some unexpected properties to the original training objective: although the representations are optimized by disparate motivations~(in our paper, BYOL$^\prime$ running towards MT and RAFT running away from MT), the encoder network is trained to be exactly the same. This observation indicates that the mechanism of the extra MLP to the network needs to be further studied.

\bibliography{iclr2021_conference}
\bibliographystyle{iclr2021_conference}

\newpage
\appendix
\section{Main thread of paper}
\label{app:thread}
The proposal of RAFT is based on a series of theoretical derivation and empirical approximation. Therefore the logic chain of our paper is fundamental to the legitimacy of our explanation on BYOL and the superiority of our newly proposed RAFT. Here we organize our main thread in the same order as the sections, to provide a clear view with readers. 

In Section~\ref{sec:3}, 
\begin{itemize}
\item As a learning framework, BYOL does not consistently work. It heavily relies on the existence of the predictor. We want to understand why this inconsistency exists. 
\item The architecture of the predictor doesn't affect the collapse of BYOL, the fact that the linear predictor $q_w(\cdot) = W(\cdot)$ prevents collapse will be used as a crucial condition in Section~\ref{sec:5}.
\end{itemize}

In Section~\ref{sec:4.1},
\begin{itemize}
\item A new disentangled objective $\Lbyolprime = \alpha \Lalign(q_w \circ f_\theta) + \beta \Lcross(q_w\circ f_\theta, f_\xi)$ is established by upper bounding. 
\item We showcase that minimizing $\Lbyol^\prime$ is close to minimizing  $\Lbyol$ in terms of alignment, uniformity, and linear evaluation protocol, which indicates that understanding the behavior of optimizing BYOL's upper bound is approximately equivalent to understanding BYOL.
\end{itemize}

In Section~\ref{sec:4.2},
\begin{itemize}
	\item We find minimizing $[\Lalign(q_w \circ f_\theta) - \Lcross(q_w \circ f_\theta, f_\xi)]$ works as well, which has the exact opposite way of incorporating the cross-model loss to BYOL$^\prime$.
	\item Based on the observation above, we propose a new self-supervised learning approach Run Away From your Teacher, which regularizes $\Lalign(q_w \circ f_\theta)$ by maximizing $\Lcross(q_w \circ f_\theta, f_\xi)$. Compared with BYOL, RAFT accords more with our common understanding.
	\item Additional experiments show that without predictor, BYOL$^\prime$ fails to regularize $\Lalign(f_\theta)$, let alone optimizing uniformity. On the contrary, although not able to actively optimize uniformity either, RAFT's maximizing $\Lcross(f_\theta, f_\xi)$ continues to be an effective regularizer for $\Lalign(f_\theta)$, which makes it more favorable~(Figure~\ref{fig:venn}).
\end{itemize}

In Section~\ref{sec:5}, 
\begin{itemize}
	\item We prove that when the predictor is linear~($q_w=W$) and the representation space is a hypersphere where only the tangential component of gradient is preserved during training, minimizing $\Lcross(W \circ f_\theta, f_\xi)$ and maximizing it obtain the same encoder $f_\theta$.
	\item Based on the equivalence above, we conclude that BYOL$^\prime$ is a special case of RAFT under conditions above. The equivalence established helps understanding several counter-intuitive behaviors of BYOL.
\end{itemize}
\section{Experimental setup}
\label{app:exp}
\textbf{Dataset}\quad Our main goal is to unravel the mystery why BYOL doesn't collapse during training and to solve the predictor-inconsistency. The most important metric is whether the algorithm collapses or not, and we don't target on developing a more powerful self-supervised learning algorithm that surpasses SOTA on large dataset. In this repsect, we limit our experiments to the scope of the CIFAR10 dataset. 
Each image is resized from $32 \times 32$ to $96 \times 96$. This change is the consequence of the tradeoff between the effect of the data augmentation and batch size: larger size of the image would allow more subtle and informative data augmentation scheme while it will reduce the training batch size, which has already been empirically shown is harmful to the model performance. 

\textbf{Model architecture}\quad In our experiments, the model is composed of three stages: an encoder $f_\theta$ that adopts the ResNet18 architecture (without the classifier on top); a projector $g_\theta$ that is comprised of a linear layer with output size 512, batch normalization, rectified linear units~(ReLU), and a final linear layer with output size 128; a predictor $q_w$ that is comprised of the same architecture as the projector but without the batch normalization.

\textbf{Training}\quad We adopt the same data augmentation scheme that is used in \citet{chen2020simple} and \citet{grill2020bootstrap} and train the BYOL on the training set for 300 epochs with batch size 128 on 3 random seeds. The objective of training is specified accordingly and the model is trained on the Adam optimizer with learning rate $3\times 10^{-4}$~\citep{kingma2014adam}. Unless stated otherwise, we update the target network with the EMA rate $4\times10^{-3}$ without the cosine smoothing trick. 

\textbf{Evaluation}\quad After training, we evaluate the encoder's performance on the widely adopted linear evaluation protocol: we fix the parameter of the encoder and we train another linear classifier on top of it using all the training labels for 100 epochs with learning rate $5\times 10^{-4}$. The final classification accuracy indicates to what degree the representations of the same class concentrate and the representations of the different class separate, and thus tells the quality of the representation.
\section{Tables}
\label{app:tables}
\begin{table}[h]
\caption{Look-up table for the models that appear in the paper.}
\label{tab:3}
\begin{center}
\begin{tabular}{ll}
\toprule
\multicolumn{1}{l}{Model Name} &\multicolumn{1}{l}{Description}  \\

\hline \\
BYOL-MLPP~(BYOL)	   	&\textbf{BYOL} with \textbf{MLP}-\textbf{P}redictor\\
BYOL-NP					&\textbf{BYOL} with \textbf{N}o \textbf{P}redictor\\
BYOL-LP					&\textbf{BYOL} with \textbf{L}inear \textbf{P}redictor\\
BYOL-LPI 				&\textbf{BYOL} with \textbf{L}inear \textbf{P}redictor initialized with $I$\\
TanBYOL-LP              &\textbf{BYOL} with \textbf{L}inear \textbf{P}redictor, preserving only the \textbf{Tan}gential gradient\\
\hline
BYOL$^\prime$-MLPP~(BYOL$^\prime$)			&trained with $\Lbyolprime = \Lalign + \Lcross$, \textbf{MLP} \textbf{P}redictor\\
BYOL$^\prime$-LP			&trained with $\Lbyolprime = \Lalign + \Lcross$, linear predictor\\
BYOL$^\prime$-NP		&trained with $\Lbyolprime = \Lalign + \Lcross$, \textbf{N}o \textbf{P}redictor\\
TanBYOL$^\prime$-LP              &\textbf{BYOL}$^\prime$ with \textbf{L}inear \textbf{P}redictor, preserving only the \textbf{Tan}gential gradient\\
\hline
RAFT-MLPP~(RAFT)	   	&\textbf{RAFT} with \textbf{MLP}-\textbf{P}redictor\\
RAFT-NP					&\textbf{RAFT} with \textbf{N}o \textbf{P}redictor\\
RAFT-LP					&\textbf{RAFT} with \textbf{L}inear \textbf{P}redictor\\
TanRAFT-LP              &\textbf{RAFT} with \textbf{L}inear \textbf{P}redictor, preserving only the \textbf{Tan}gential gradient\\

\bottomrule

\end{tabular}
\end{center}
\end{table}

\begin{table}[h]
\caption{Evaluation results of RAFT on CIFAR10. $\alpha$ and $\beta$ represents the weight of $\Lalign(q_w \circ f_\theta)$, and $\Lcross(q_w \circ f_\theta, f_\xi)$, i.e.
$\gL =\alpha \Lalign(q_w \circ f_\theta) + \beta \Lcross(q_w \circ f_\theta, f_\xi).$
All the quantifiable metrics are evaluated after 300 epochs of training of the training set of CIFAR10.
Compared to BYOL$^\prime$, our proposed RAFT is better in terms of the effectiveness of regularizing the $\Lalign$.}
\label{tab:2}
\begin{center}
\begin{tabular}{llrr|ccc}
\toprule
\multicolumn{1}{l}{\textbf{Model}} &\multicolumn{1}{l}{$q_w$}  
&\multicolumn{1}{c}{$\alpha$} &\multicolumn{1}{c}{$\beta$} &\multicolumn{1}{c}{$\Lalign(q_w \circ f_\theta)$} 	&\multicolumn{1}{c}{$\Luniform(f_\theta)$} &\multicolumn{1}{c}{Linear Evaluation Protocol(\%)}\\
\hline
Rand-Baseline    	&$W$						&-		&-		&$7.81\times 10^{-3}$		&$-0.51$		&$42.74\pm0.41$\\
\hline 

RAFT	    	&MLP			&1		&-0.1		&$8.37\times 10^{-5}$	&$-2.00$		&$65.53\pm0.99$\\ 
			&MLP			&1		&-1		 	&$1.89\times 10^{-3}$	&$-2.04$		&$71.31\pm0.75$\\
			&MLP			&1		&-10		&$1.00\times 10^{-2}$	&$-0.29$		&$25.88\pm0.42$\\
\hline 
RAFT-LP		&$W$			&1		&-0.1		&$8.22\times 10^{-6}$	&$-1.61$		&$52.57\pm2.72$\\ 
			&$W$			&1		&-1		 	&$7.42\times 10^{-4}$	&$-2.25$		&$67.55\pm0.55$\\
			&$W$			&1		&-10		&$3.70\times 10^{-4}$	&$-2.15$		&$66.10\pm0.82$\\
\hline 
RAFT-NP		&$I$		&1		&-1		 	&$1.61\times 10^{-3}$	&$-0.01$		&$11.72\pm0.05$\\
			&$I$		&1		&-10		&$1.54\times 10^{-2}$	&$-0.99$		&$32.13\pm0.52$\\
			&$I$		&1		&-100		&$1.56\times 10^{-2}$	&$-1.29$		&$29.36\pm0.53$\\
\hline
BYOL$^\prime$-NP
			&$I$		&1		&1		 	&$1.35\times 10^{-10}$	&$-0.12$		&$16.92\pm1.05$\\
			&$I$		&1		&10			&$2.38\times 10^{-10}$	&$-0.88$		&$24.42\pm1.14$\\
			&$I$		&1		&100		&$4.55\times 10^{-8}$	&$-1.14$		&$37.48\pm2.06$\\

\bottomrule
		
\end{tabular}
\end{center}
\end{table} 
\newpage
\section{Algorithms}
\label{app:alg}
\begin{algorithm}[h]
\DontPrintSemicolon
\SetAlgoLined
\SetKwInOut{Input}{Inputs}
\SetKwInOut{Output}{Output}
\Input{
\par
\begin{tabular}{l l}
$\gX$, $\gT_1,$ and $\gT_2$ 	& set of images and distributions of transformations\\
$\theta$ and $f_\theta$  & model parameters and encoder\\
$w$ and $q_w$ 					& predictor parameters and predictor\\
$\xi$ and $f_\xi$ 				& MT parameters and MT\\
$\mathrm{optimizer}$ & optimizer, updates online parameters using the loss gradient\\
$K$ and $N$ & total number of optimization steps and batch size\\
$\{\tau_k\}_{k=1}^K$ and $\{\eta_k\}_{k=1}^K$ & target network update schedule and learning rate schedule\\
\end{tabular}}

\For{$k=1$ \KwTo $K$}{ 
$\gB \gets \{x_i\}_{i=1}^N\sim\gX^N$ \tcp*{sample a batch of $N$ images}
\For{$x_i\in\gB$}{
$t_1 \sim \gT_1 {\rm\ and\ } t_2 \sim \gT_2$ \tcp*{sample image transformations}
$z_1 \gets q_w(f_\theta(t_1(x_i))) {\rm\ and\ } z_2 \gets q_w(f_\theta(t_2(x_i)))$ \tcp*{reps for model}
$z_1^\prime \gets f_\xi(t_1(x_i)) {\rm\ and\ } z_2^\prime \gets f_\xi(t_2(x_i))$ \tcp*{reps for MT}
$l_i = \big\| \frac{z_1}{\|z_1\|_2} - \frac{z_2}{\|z_2\|_2} \big\|_2^2$ \tcp*{loss for alignment}
$l_i^\prime = -\frac{1}{2} \left( \big\| \frac{z_1}{\|z_1\|_2} - \frac{z_1^\prime}{\|z_1^\prime\|_2} \big\|_2^2 + \big\| \frac{z_2}{\|z_2\|_2} - \frac{z_2^\prime}{\|z_2^\prime\|_2} \big\|_2^2 \right)$ \tcp*{loss for cross-model}
}
$\delta \theta  \leftarrow  \frac{1}{N}\left(\sum\limits_{i=1}^{N} \partial_\theta l_i + \partial_\theta l_i^\prime \right)$ \tcp*{compute the loss gradient w.r.t.\,$\theta$}
$\theta \leftarrow \mathrm{optimizer}(\theta, \delta \theta, \eta_k)$\tcp*{update trainable parameters}
$\xi \leftarrow \tau_k \xi + (1 - \tau_k) \theta$\tcp*{update target parameters}
}

\Output{encoder $f_\theta$}
\caption{RAFT: Run Away From your Teacher}
\label{alg:raft}
\end{algorithm}
\newpage
\section{Visualization of representation distribution evolutions}\label{app:d.1}
\begin{figure}[h]
\centering
\begin{subfigure}[b]{1\textwidth}
   \includegraphics[width=1\linewidth]{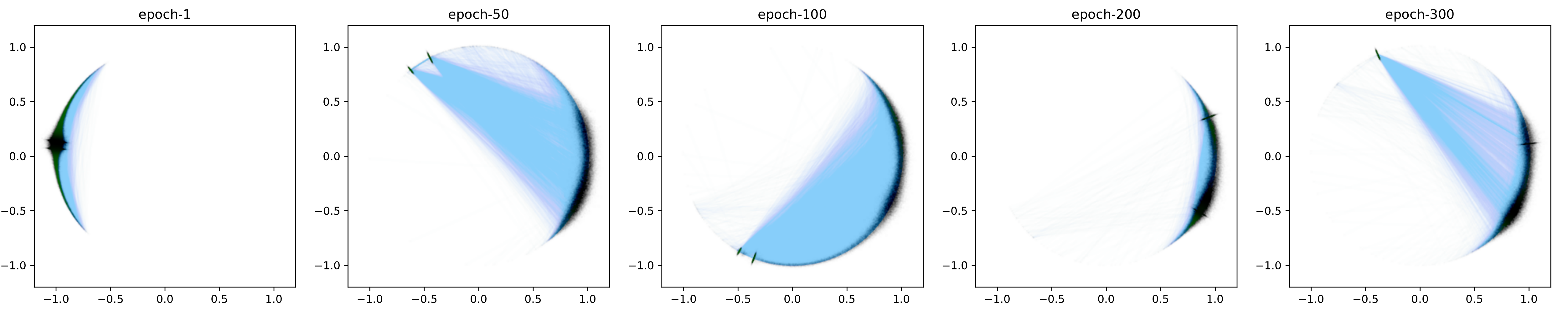}
   \caption{Supervised learning representation distribution evolution}
   \label{fig:dist-supervised} 
\end{subfigure}

\begin{subfigure}[b]{1\textwidth}
   \includegraphics[width=1\linewidth]{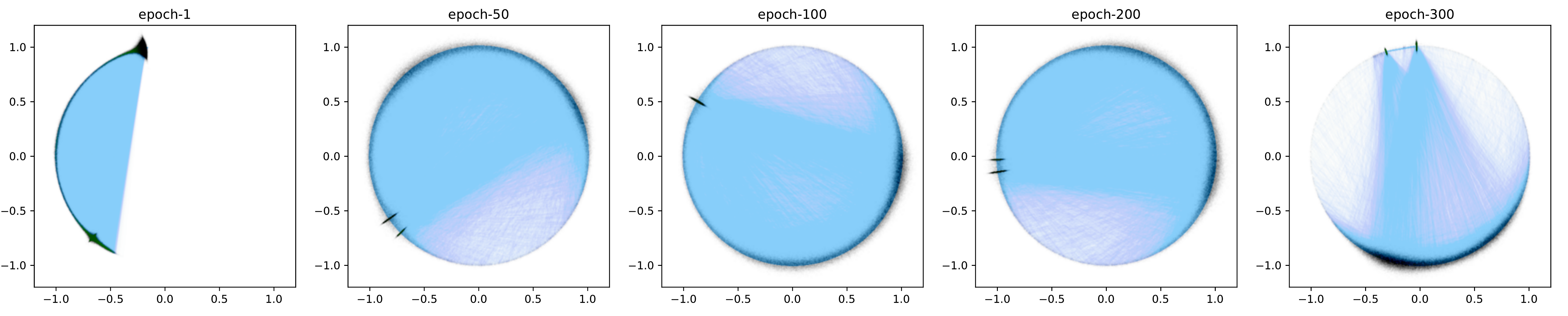}
   \caption{BYOL representation distribution evolution}
   \label{fig:dist-byol} 
\end{subfigure}

\begin{subfigure}[b]{1\textwidth}
   \includegraphics[width=1\linewidth]{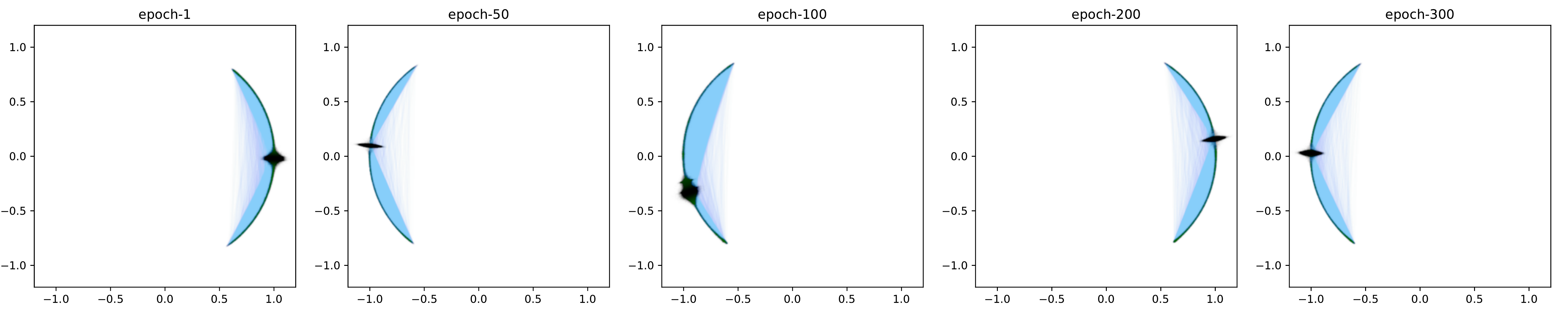}
   \caption{BYOL representation distribution evolution \textbf{w/o predictor}}
   \label{fig:dist-byol-nopred}
\end{subfigure}

\begin{subfigure}[b]{1\textwidth}
   \includegraphics[width=1\linewidth]{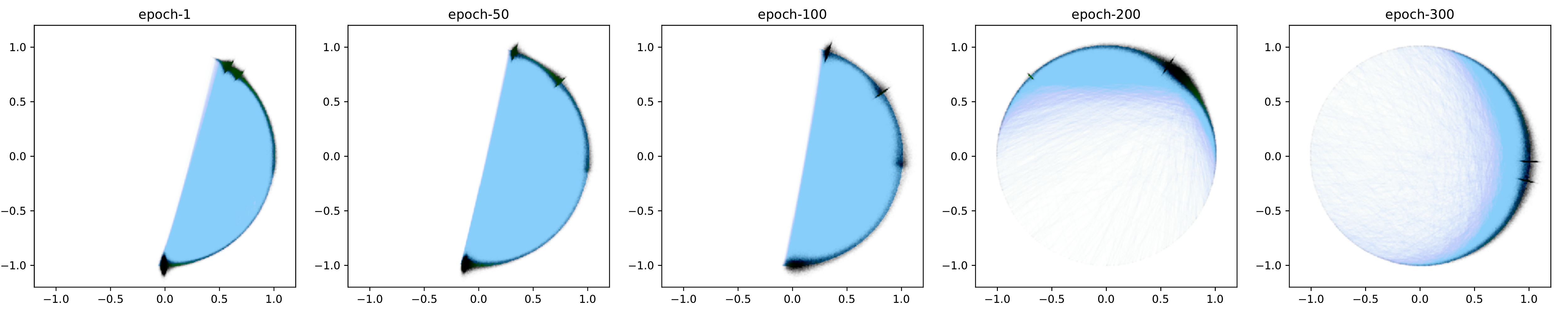}
   \caption{BYOL$^\prime(\Lalign + \Lcross)$ representation distribution evolution}
   \label{fig:dist-align+cross}
\end{subfigure}

\begin{subfigure}[b]{1\textwidth}
   \includegraphics[width=1\linewidth]{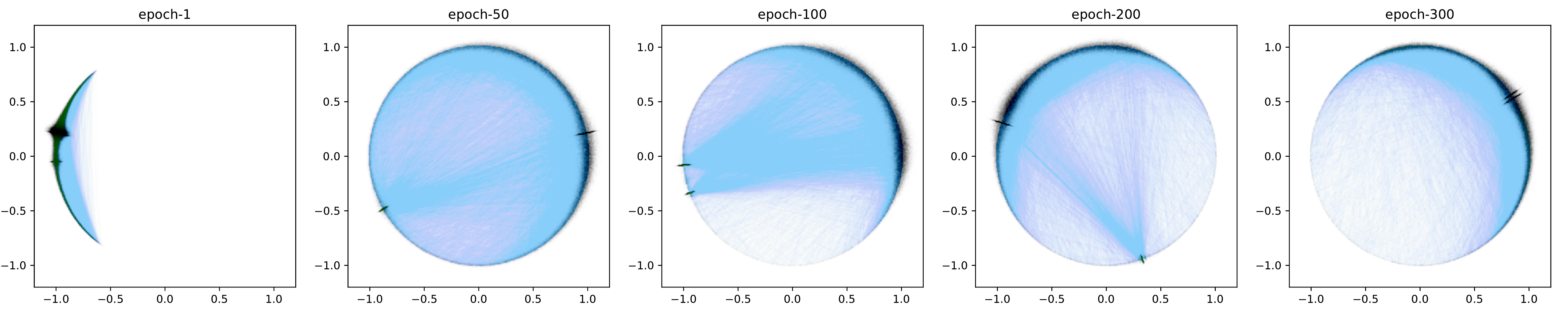}
   \caption{RAFT$(\Lalign - \Lcross)$ representation distribution evolution}
   \label{fig:dist-align-cross}
\end{subfigure}
\caption{Visualization of the representation distribution evolution on CIFAR10 training set. We project the representation $f_\theta(x)$ to 2-D dimension using PCA~\citep{wold1987principal} and then normalize to a unit sphere. The width of the circle shows the density of the data points that are projected to that particular position. Two dots residing on each side of the blue line across the circle represents two augmented views of the same data. 
\textbf{(a)} Supervised learning has no restriction on the uniformity of representation space.
\textbf{(b)} BYOL with predictor evenly projects the data to different positions. 
\textbf{(c)} BYOL \textbf{w/o predictor} tends to project the huge portion of the data to the same position.
\textbf{(d)} BYOL's upper bound BYOL$^\prime$ also effectively disperses representations on the sphere.
\textbf{(e)} Our RAFT shows that minimizing/maximizing $\Lcross$ has similar effect on the final representation distribution.
}
\label{fig:dist}
\end{figure}

\section{Proof of BYOL upper bounding}
\label{app:upperbound-byol}
In this section, we provide how we derive the upper bound of $\Lbyol$. For the sake of simplicity, without loss of rigor, we use $\vt_1=t_1(x)$ to represent the transformed input $x$. 
\begin{align}
	\Lbyol
	&= \E_{x\sim \gX, t_1\sim \gT_1, t_2\sim \gT_2} \left[ \left\| q_w(f_\theta(t_1(x))) -f_\xi(t_2(x)) \right\|_2^2 \right] \nonumber \\
	&= \E\left[ \left\| q_w(f_\theta(x_1)) - q_w(f_\theta(x_2)) + q_w(f_\theta(x_2)) - f_\xi(x_2) \right\|_2^2 \right]. \label{eq:byol-decomp-mid}
\end{align}
By applying the Cauchy-Schwarz's inequality to Eq.~\ref{eq:byol-decomp-mid}, we yield:
\begin{align}
\Lbyol
	&\leq (1+\frac{1}{\lambda}) \left(\E\left[ \big\| q_w(f_\theta(x_1)) - q_w(f_\theta(x_2))\big\|_2^2 \right]  + \lambda \E\left[ \big\|q_w(f_\theta(x_2)) - f_\xi(x_2) \big\|_2^2 \right] \right) \nonumber\\
	&= (1+\frac{1}{\lambda}) \left(\Lalign(q_w \circ f_\theta; \gP_{\text{pos}}) + \lambda \Lcross(q_w \circ f_\theta, f_\xi, \gX)\right) 
	\label{eq:cross-model-2b}
\end{align}
which stands for any $\lambda >0$, where the positive-pair distribution $\gP_{\text{pos}}$ is modeled by the chain rule of the conditional probability:
\begin{align}
	\gP_{\text{pos}}(x_1, x_2) &= \gX(x) \cdot \gT_1(t_1|x) \cdot \gT_2(t_2|x). \nonumber
\end{align} 

For any given pair $\alpha, \beta > 0$, we let $\lambda = \beta/\alpha$ and substitute it back to Eq.~\ref{eq:cross-model-2b}, yielding
\begin{align}
\Lbyol 
	&\leq (1+\frac{\alpha}{\beta}) \left[\Lalign(q_w \circ f_\theta; \gP_{\text{pos}}) + \frac{\beta}{\alpha} \Lcross(q_w \circ f_\theta, f_\xi, \gX)\right] \nonumber \\
	&= (\frac{1}{\alpha} + \frac{1}{\beta}) \left[\alpha \Lalign(q_w \circ f_\theta; \gP_{\text{pos}}) + \beta\Lcross(q_w \circ f_\theta, f_\xi, \gX)\right] \nonumber \\
	&= (\frac{1}{\alpha} + \frac{1}{\beta}) \Lbyol^\prime,
\end{align} 
and as an optimization objective, we have
\begin{align}
	\min\space (\frac{1}{\alpha} + \frac{1}{\beta}) \Lbyol^\prime \Leftrightarrow \min\space \Lbyol^\prime
\end{align}
Therefore we have proven that $\Lbyol^\prime$ as optimization objective is the upper bound of $\Lbyol$. 

To note here, one can subtract and add a different term $f_\xi(x_1)$ to form the alignment loss on the side of MT $f_\xi$, 
\begin{align}
	\Lbyol
	&= \E\left[ \big\| q_w(f_\theta(x_1)) - f_\xi(x_1) + f_\xi(x_1) - f_\xi(x_2) \big\|_2^2 \right],
\end{align}
while it doesn't help to solve the problem since the alignment constraint on the side of MT doesn't generate gradients.

\newpage
\section{Proof of one-to-one correspondence between BYOL$^\prime$ and RAFT}
\label{app:one-to-one}
\begin{mar}[One-to-one correspondence between BYOL$^\prime$ and RAFT]
\label{th:one-to-one}
There is a one-to-one correspondence between parameter trajectories of BYOL$^\prime$ and RAFT when the following three conditions hold:
\begin{itemize}
    \item[i.\space\space] the representation space is a hypersphere;
    \item[ii.\space] the predictor is a linear transformation, i.e. $q_w(\cdot)=W(\cdot)$;
    \item[iii.] only the tangential component of the gradient on the hypersphere is preserved.
\end{itemize}
\end{mar}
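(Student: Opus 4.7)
The plan is to construct an explicit bijection between parameter trajectories via sign-flipping the predictor weight, and then to verify that the two gradient flows become identical under this change of variables. The heuristic hinted at in the section text is that BYOL$^\prime$ initialized at $(\theta^{(0)}, W^{(0)})$ matches RAFT initialized at $(\theta^{(0)}, -W^{(0)})$, so the goal is to promote this heuristic into an induction over training steps.

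The first move is to exploit conditions (i) and (ii). Writing $\hat z = Wf_\theta(x)/\|Wf_\theta(x)\|_2$ for the $\ell_2$-normalized online representation, replacing $W$ with $-W$ simply flips $\hat z \mapsto -\hat z$ on every input. Both arguments of $\Lalign(W \circ f_\theta)$ are online outputs, so they flip together and the squared distance is invariant; in $\Lcross(W \circ f_\theta, f_\xi)$ only the online side flips, while the teacher side does not, so
\begin{equation}
\|{-}\hat z - \hat z_\xi\|_2^2 = 2 + 2\langle \hat z, \hat z_\xi\rangle = 4 - \|\hat z - \hat z_\xi\|_2^2 .
\end{equation}
Taking expectations yields the key algebraic identity
\begin{equation}
\Lbyolprime(\theta, -W, \xi) = 4\beta + \Lraft(\theta, W, \xi).
\end{equation}

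The second step is to propagate this identity along the discrete training dynamics. I would pair a BYOL$^\prime$ run started at $(\theta^{(0)}, W^{(0)}, \xi^{(0)})$ with a RAFT run started at $(\theta^{(0)}, -W^{(0)}, \xi^{(0)})$ and argue by induction on the step index $k$ that $\theta_B^{(k)} = \theta_R^{(k)}$, $W_B^{(k)} = -W_R^{(k)}$, and $\xi_B^{(k)} = \xi_R^{(k)}$. In the inductive step the constant $4\beta$ vanishes under differentiation, so $\nabla_\theta \Lbyolprime(\theta, -W, \xi) = \nabla_\theta \Lraft(\theta, W, \xi)$, giving identical $\theta$-updates; the chain rule applied to $W \mapsto -W$ inserts an overall sign in the $W$-gradient, which is exactly compensated by the sign flip relating $W_B$ and $W_R$; and because the EMA rule $\xi \leftarrow \tau \xi + (1-\tau)\theta$ depends only on $\theta$, the two teachers remain synchronized. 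Condition (iii) enters here to guarantee that the scalar identity above actually dictates the parameter updates: the raw gradient of a post-normalization squared distance contains a Jacobian through the normalizer, and only its tangential component on the sphere is governed by the inner-product form $1 - \langle \hat z, \hat z_\xi\rangle$ that cleanly negates under $\hat z \mapsto -\hat z$. The stop-gradient tangentialization in the remark, $\|\bar z - \lambda z\|_2^2/\lambda$ with $\lambda = \operatorname{sg}(\langle z,\bar z\rangle)$, is precisely what reduces each update to this inner-product form, so the sign symmetry of $\Lcross$ carries through to the actual gradient step.

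The main obstacle I expect is pinning down condition (iii) rigorously: one has to verify that the tangential projection in use genuinely reduces the $W$- and $\theta$-gradients of the squared-distance cross-model loss to gradients of an inner-product functional that is strictly odd in $\hat z$, so that flipping $W$ flips these gradients exactly, with no residual radial term surviving. Once that is handled, the conclusion is immediate: the $\theta$-trajectories, and therefore the encoders $f_\theta$, are literally identical in the two runs while the $W$-trajectories differ only by a global sign, establishing the claimed one-to-one correspondence and in particular the equivalence of BYOL$^\prime$ and RAFT as representation learners.
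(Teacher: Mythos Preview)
Your proposal is correct and pins down the same bijection $(\theta,W)\leftrightarrow(\theta,-W)$ and the same step-by-step induction as the paper. The genuine difference is in how the gradient match is established at each step. The paper's Appendix~G works term-by-term: it writes out $\partial\Lalign/\partial\theta$, $\partial\Lalign/\partial W$, $\partial\Lcross/\partial\theta$, $\partial\Lcross/\partial W$ explicitly for both objectives, decomposes each error vector $(z_1'-z_2')$ and $(\bar z_2-z_2')$ into tangential and normal parts at the relevant point of the sphere, and verifies the four sign relations by hand. Your scalar identity $\Lbyolprime(\theta,-W,\xi)=4\beta+\Lraft(\theta,W,\xi)$ collapses all of this into one line: differentiating in $\theta$ gives the $\theta$-gradient equality directly, and differentiating in $W$ together with the chain-rule sign from $W\mapsto -W$ gives the opposite-sign relation for the $W$-gradients. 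This is shorter and more conceptual. It also shows that condition~(iii) is less of an obstacle than you suggest: once condition~(i) forces $\|\hat z\|=\|\hat z_\xi\|=1$, your identity holds for the \emph{full} loss, so the full parameter gradients already agree (up to the $W$-sign), and any tangential projection---which acts by the same linear map at antipodal points $\hat z$ and $-\hat z$---applied consistently to both runs preserves the correspondence. The paper's explicit route makes the sphere-geometric cancellations visible but is technically heavier; your route is the cleaner proof.
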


Without losing generality, suppose that $x_1=t_1(x),x_2=t_2(x)$ where $x$ is an arbitrary input and batch size is $1$, and $(\alpha,\beta)=(1,1)$. We set BYOL$^\prime$ and RAFT with initial parameters $(\theta^\prime,W^\prime)=(\theta^{(0)},W^{(0)})$ and $(\theta,W)=(\theta^{(0)},-W^{(0)})$ respectively. For convenience, we assumes the dot product ``$\cdot$'' ignores the row layout or column layout in the chain rule of derivatives and we define the following symbols:
\begin{align}
	\overline{z_2}&=f_\xi(x_2), \\
	z_1^\prime &=W^\prime f_{{\theta}^\prime}(x_1), \\
	z_2^\prime &=W^\prime f_{{\theta}^\prime}(x_2), \\
	z_1&=Wf_{\theta}(x_1), \\
	z_2&=Wf_{\theta}(x_2).
\end{align}
Based on the notations defined, we rewrite the loss terms of BYOL$^\prime$ and RAFT as follows:
\begin{align}
	\gL^{\text{BYOL}^\prime}_\text{align} &=\big\|z_1^\prime-z_2^\prime\big\|_2^2, \\
	\gL^{\text{BYOL}^\prime}_\text{cross-model}&=\big\|z_2^\prime-\overline{z_2}\big\|_2^2, \\
	\gL^{\text{RAFT}}_\text{align} &=\big\|z_1-z_2\big\|_2^2, \\
	\gL^{\text{RAFT}}_\text{cross-model}&=-\big\|z_2-\overline{z_2}\big\|_2^2.
\end{align}
The two objectives are following:
\begin{align}
    \Lbyolprime  
    &= \gL^{\text{BYOL}^\prime}_\text{align} +\gL^{\text{BYOL}^\prime}_\text{cross-model},\\
    \Lraft
 	&=\gL^{\text{RAFT}}_\text{align} +\gL^{\text{RAFT}}_\text{cross-model}.
\end{align}
We claim that under the third condition, the following equations hold:
\begin{align}
	\Big[\frac{\partial\Lbyolprime }{\partial\theta^\prime}\Big]_\parallel=\Big[\frac{\partial \gL_{\text{RAFT}} }{\partial\theta}\Big]_\parallel,\Big[\frac{\partial\Lbyolprime }{\partial W^\prime}\Big]_\parallel=-\Big[\frac{\partial \gL_{\text{RAFT}} }{\partial W}\Big]_\parallel,
\end{align}
where subscript $\parallel$ denotes the tangential component of the gradient.

Firstly we show the equivalence with respect to $\theta$. Differentiate $\gL^{\text{BYOL}^\prime}_\text{align},\gL^{\text{RAFT}}_\text{align}$ with respect to $\theta^\prime_{ij}$, $\theta_{ij}$ respectively, we obtain 
\begin{align}
\frac{\partial \gL^{\text{BYOL}^\prime}_\text{align}}{\partial \theta_{ij}^\prime }
&=2\left[\left(z_1^\prime-z_2^\prime\right)_\parallel+\left(z_1^\prime-z_2^\prime\right)_\perp \right]\cdot\left(\frac{\partial z_1^\prime}{\partial \theta_{ij}^\prime}-\frac{\partial z_2^\prime}{\partial \theta_{ij}^\prime}\right),\\
	\frac{\partial \gL^{\text{RAFT}}_\text{align}}{\partial \theta_{ij} }
&=2\left[\left(z_1-z_2\right)_\parallel+\left(z_1-z_2\right)_\perp \right]\cdot\left(\frac{\partial z_1}{\partial \theta_{ij}}-\frac{\partial z_2}{\partial \theta_{ij}}\right),\\
\Big[ \frac{\partial \gL^{\text{BYOL}^\prime}_\text{align}}{\partial \theta_{ij}^\prime }\Big]_\parallel
&=2\left(z_1^\prime-z_2^\prime\right)_\parallel\cdot\left(\frac{\partial z_1^\prime}{\partial \theta_{ij}^\prime}-\frac{\partial z_2^\prime}{\partial \theta_{ij}^\prime}\right),\\
\Big[\frac{\partial \gL^{\text{RAFT}}_\text{align}}{\partial \theta_{ij} }\Big]_\parallel
&=2\left(z_1-z_2\right)_\parallel\cdot\left(\frac{\partial z_1}{\partial \theta_{ij}}-\frac{\partial z_2}{\partial \theta_{ij}}\right),
\end{align}
where $\left(z^\prime_1-z_2^\prime\right),\left(z_1-z_2\right)$ are vectors at the points $z_2^{\prime}$ and $z_2$ on the hypersphere and we decompose the vector into the tangential~(denoted by $\parallel$) and normal component~(denoted by $\perp$):
\begin{align}
    \left(z_1^\prime-z_2^\prime\right)=\left[\left(z_1^\prime-z_2^\prime\right)_\parallel+\left(z_1^\prime-z_2^\prime\right)_\perp \right],
    \left(z_1-z_2\right)=\left[\left(z_1-z_2\right)_\parallel+\left(z_1-z_2\right)_\perp \right],
\end{align}


Generally, suppose $z$ is a unit vector starting at the origin point, which is perpendicular to the unit hypersphere at the point $z$, for any vector $\vv$ starting at the point $z$, we have 
\begin{align}
	\vv_{\perp}=\langle\vv , z\rangle \cdot z,\quad 
 \vv_\parallel = \vv -\vv_{\perp}=\vv-\langle\vv , z\rangle \cdot z.
\end{align}
Then we can compute the tangential component of the gradient:
 \begin{align}
 \label{eq:parallel}
 \left(z_1^\prime-z_2^\prime\right) _\parallel=
&\left(z_1^\prime-z_2^\prime\right) -\langle z_1^\prime-z_2^\prime, z_2^\prime \rangle \cdot z_2^\prime\nonumber\\
=&z_1^\prime-\langle z_2^{\prime},z_1^\prime\rangle\cdot z_2^\prime,\nonumber\\
 \left(z_1-z_2\right) _\parallel=
&\left(z_1-z_2\right) -\langle z_1-z_2, z_2 \rangle \cdot z_2\nonumber\\
=&z_1-\langle z_2,z_1\rangle\cdot z_2.
\end{align}
Because of the initialization, $z_1^\prime=-z_1$, $z_2=-z_2^\prime$, therefore we have
\begin{align}
\left(z_1^\prime-z_2^\prime\right) _\parallel
&= -\left(z_1-z_2\right) _\parallel,\\
\left(\frac{\partial z_1^\prime}{\partial \theta_{ij}^\prime}-\frac{\partial z_2^\prime}{\partial \theta_{ij}^\prime}\right)
&=-\left(\frac{\partial z_1}{\partial \theta_{ij}}-\frac{\partial z_2}{\partial \theta_{ij}}\right).
\end{align}
So we show that
\begin{align}
\label{eq:1}
\Big[ \frac{\partial \gL^{\text{BYOL}^\prime}_\text{align}}{\partial \theta_{ij}^\prime }\Big]_\parallel
=\Big[\frac{\partial \gL^{\text{RAFT}}_\text{align}}{\partial \theta_{ij} }\Big]_\parallel.
\end{align}

We differentiate $\gL^{\text{BYOL}^\prime}_{\text{cross-model}}$, $\gL^\text{RAFT}_{\text{cross-model}}$ with respect to $\theta^{\prime}_{ij}$, $\theta_{ij}$ respectively, we obtain that
\begin{align}
 \Big[\frac{\partial\gL^{\text{BYOL}^\prime}_\text{cross-model}}{\partial \theta^{\prime}_{ij}}\Big]_\parallel 
 &=-2\left(\overline{z_2}-z^{\prime}_2\right)_\parallel \cdot W^{\prime} \cdot \frac{\partial f_{\theta^{\prime}}(x_2)}{\partial \theta^{\prime}_{ij}},\\
\Big[\frac{\partial \gL^\text{RAFT}_\text{cross-model}}{\partial \theta_{ij}}\Big]_\parallel
&=2\left(\overline{z_2}-z_2\right)_\parallel \cdot W \cdot \frac{\partial f_{{\theta}}(x_2)}{\partial {{\theta}}_{ij}}.
\end{align}
Similar to Eq.~\ref{eq:parallel}, we derive that 
\begin{align}
	\left(\overline{z_2}-z^{\prime}_2\right)_\parallel=\left(\overline{z_2}-z_2\right)_\parallel
\end{align}
Since $\theta^\prime=\theta$, $\partial f_{{\theta}^\prime}(x_2)/\partial {{\theta}^\prime}_{ij}=\partial f_{{\theta}}(x_2)/\partial {{\theta}}_{ij}$ and $W^\prime=-W$,
we have that
\begin{align}
\label{eq:2}
 \Big[\frac{\partial\gL^{\text{BYOL}^\prime}_\text{cross-model}}{\partial \theta^\prime_{ij}}\Big]_\parallel =\Big[\frac{\partial \gL^\text{RAFT}_\text{cross-model}}{\partial \theta_{ij}}\Big]_\parallel.
\end{align}

Therefore by Eq.~\ref{eq:1} and Eq.~\ref{eq:2}, RAFT's updating of the parameter $\theta$ is equal to BYOL$^\prime$:
\begin{align}
    \Big[\frac{\partial\Lbyolprime }{\partial\theta^\prime}\Big]_\parallel
	&=\Big[\frac{\partial \gL_{\text{RAFT}} }{\partial\theta}\Big]_\parallel.\label{eq:con1}
\end{align}

Also we differentiate $\gL^{\text{BYOL}^\prime}_\text{align},\gL^{\text{RAFT}}_\text{align}$ with respect to $W^\prime_{ij}$, $W_{ij}$ respectively, we obtain that
\begin{align}
\Big[\frac{\partial \gL^{\text{BYOL}^\prime}_\text{align}}{\partial W_{ij}^\prime }\Big]_\parallel
&=2\left(z_1^\prime-z_2^\prime\right)_\parallel\cdot\left(\frac{\partial z_1^\prime}{\partial W_{ij}^\prime}-\frac{\partial z_2^\prime}{\partial W_{ij}^\prime}\right),\\
\Big[\frac{\partial \gL^{\text{RAFT}}_\text{align}}{\partial W_{ij} }\Big]_\parallel
&=2\left(z_1-z_2\right)_\parallel\cdot\left(\frac{\partial z_1}{\partial W_{ij}}-\frac{\partial z_2}{\partial W_{ij}}\right).
\end{align}

Note that $z_1=-z_1^\prime$, $z_2=-z_2^\prime$ and similar to Eq.~\ref{eq:parallel}, easy to show that
\begin{align}
(z_1^\prime-z_2^\prime)_\parallel &=-(z_1-z_2)_\parallel.
\end{align}
Also, 
\begin{align}
\label{eq:gradient}
	\frac{\partial z_{1}^{\prime}}{\partial W_{i j}^{\prime}}=\frac{\partial z_{1}}{\partial W_{i j}},\\
	\frac{\partial z_{2}^{\prime}}{\partial W_{i j}^{\prime}}=\frac{\partial z_{2}}{\partial W_{i j}}.
\end{align}
So we have
\begin{align}
\label{eq:3}
\Big[ \frac{\partial \gL^{\text{BYOL}^\prime}_\text{align}}{\partial W_{ij}^\prime }\Big]_\parallel
=-\Big[\frac{\partial \gL^{\text{RAFT}}_\text{align}}{\partial W_{ij} }\Big]_\parallel.
\end{align}
Differentiate $\gL^{\text{BYOL}^\prime}_{\text{cross-model}},\gL^\text{RAFT}_{\text{cross-model}}$ with respect to $W_{ij}^\prime,W_{ij}$ respectively, we obtain that
\begin{align}
\Big[\frac{\partial\gL^{\text{BYOL}^\prime}_\text{cross-model}}{\partial W^{\prime}_{ij}}\Big]_\parallel
&=-2\left(\overline{z_2}-z^{\prime}_2\right)_\parallel
 \cdot
 \frac{\partial z^{\prime}_2}{\partial W^{\prime}_{ij}},\\
\Big[\frac{\partial \gL^\text{RAFT}_\text{cross-model}}{\partial W_{ij}}\Big]_\parallel
&=2\left(\overline{z_2}-z_2\right)_\parallel \cdot
 \frac{\partial z_2}{\partial W_{ij}}.
\end{align}
Then we have that 
\begin{align}
\label{eq:4}
\Big[\frac{\partial\gL^{\text{BYOL}^\prime}_\text{cross-model}}{\partial W^{\prime}_{ij}}\Big]_\parallel
=-\Big[\frac{\partial \gL^\text{RAFT}_\text{cross-model}}{\partial W_{ij}}\Big]_\parallel.
\end{align}

By Eq.~\ref{eq:3} and Eq.~\ref{eq:4}, we prove that the cross-model loss of BYOL$^\prime$ generates the opposite gradient to RAFT, namely,
\begin{align}
	\Big[\frac{\partial\Lbyolprime }{\partial W^\prime}\Big]_\parallel
		&=-\Big[\frac{\partial \gL_{\text{RAFT}} }{\partial W}\Big]_\parallel.\label{eq:con2}
\end{align}

Therefore by the two main conclusions Eq.~\ref{eq:con1} and Eq.~\ref{eq:con2}, for BYOL$^\prime$ with parameters $(\theta^\prime,W^\prime)=(\theta^{(0)},W^{(0)})$ and RAFT with parameters $(\theta,W)=(\theta^{(0)},-W^{(0)})$ respectively, we have
\begin{align}
 &\text{BYOL:}\quad \left(\theta^{\prime(1)} =\theta^{\prime(0)}-\left.\eta\Big[\frac{\partial\Lbyolprime }{\partial\theta^\prime}\Big]_\parallel\right|_{\theta^\prime=\theta^{\prime(0)}} ,W^{\prime(1)} =W^{\prime(0)}-\left.\eta\Big[\frac{\partial\Lbyolprime }{\partial W^\prime}\Big]_\parallel \right|_{W^\prime=W^{\prime(0)}}\right),\\
 &\text{RAFT:}\quad \left(\theta^{(1)} =\theta^{(0)}-\left.\eta\Big[\frac{\partial \gL_{\text{RAFT}} }{\partial\theta}\Big]_\parallel\right|_{\theta=\theta^{(0)}} ,W^{(1)} =W^{(0)}-\left.\eta\Big[\frac{\partial \gL_{\text{RAFT}} }{\partial W}\Big]_\parallel\right|_{W=W^{(0)}}\right).
\end{align}
We derive that $\theta^{(1)} =\theta^{\prime(1)},W^{(1)}=-W^{\prime(1)}$, and furthermore, $\theta^{(k)} =\theta^{\prime(k)},W^{(k)}=-W^{\prime(k)}$ at any iteration $k$.
In this way, we establish an one-to-one correspondence between the parameter trajectories of BYOL$^\prime$ and RAFT in training, referred to as $\gH$:
\begin{align}
 \gH : \text{RAFT}_{(\theta,W)}\mapsto \text{BYOL}^\prime_{(\theta,-W)}
\end{align}
\newpage
\section{Non-trivial solutions created by predictor}

\label{app:non-trivial}

Suppose inputs $x_1=t_1(x)$ and $x_2=t_2(x)$ is $n$-dimensional. And in linear model, $f_\theta$, $f_\xi$ and $q_w$ is parameterized by matrices $(\theta_{ij})_{n\times n}$, $(\xi_{ij})_{n\times n}$ and $(W_{ij})_{m\times m}$ respectively. 

The objective is
\begin{align}
\Lbyol &= \E_{\substack{(x, t_1, t_2)\sim(\gX, \gT_1, \gT_2)}} \left[ \big\|q_w(f_\theta(t_1(x))) - f_\xi(t_2(x))\big\|_2^2 \right] \nonumber \\
&=\E_{\substack{(x, t_1, t_2)\sim(\gX, \gT_1, \gT_2)}}\left[\|W\theta x_1-\xi x_2\|_2^2\right]
\end{align}
Differentiate $\|W\theta x_1-\xi x_2\|_2^2$ with respect to $\theta_{ij}$ and $W_{ij}$, we have
\begin{align}
\frac{\partial \left[\|W\theta x_1-\xi x_2\|_2^2\right]}{\partial\theta_{i j}}
&=\sum_{k=1}^m\frac{\partial\left[ W_{k,:}(\theta x_1)-\xi_{k,:}x_2\right]^2}{\partial\theta_{i j}}\nonumber\\
&=\sum_{k=1}^m2\left[ W_{k,:}(\theta x_1)-\xi_{k,:}x_2\right]\frac{\partial\left[ W_{k,:}(\theta x_1)-\xi_{k,:}x_2\right]}{\theta_{i j}}\nonumber\\
&=\sum_{k=1}^m 2T_kW_{k,:}(x_1)_j\nonumber\\
&=2\left[\left(W^{\top}T\right)x_1^{\top}\right]_{i j},\\
\frac{\partial \left[\|W\theta x_1-\xi x_2\|_2^2\right]}{\partial W_{i j}}
&= \sum_{k=1}^m\frac{\partial\left[ W_{k,:}(\theta x_1)-\xi_{k,:}x_2\right]^2}{\partial W_{i j}}\nonumber\\
&=\sum_{k=1}^m2\left[ W_{k,:}(\theta x_1)-\xi_{k,:}x_2\right]\frac{\partial\left[ W_{k,:}(\theta x_1)-\xi_{k,:}x_2\right]}{\partial W_{i j}}\nonumber\\
&=\sum_{k=1}^m 2T_k(\theta x_1)_{j}\mathbf{1}_{\{k=i\}}\nonumber\\
&= 2T_i\theta_{j,:}x_1\nonumber\\
&= 2\left[ T (\theta x_1)^{\top}\right]_{i j},
\end{align}

where $T_k=\left[ W_{k,:}(\theta x_1)-\xi_{k,:}x_2\right]$, $T=\left(T_1,T_2,\ldots,T_m\right)^{\top}= W(\theta x_1)-\xi x_2$, and $W_{k,:},\xi_{k,:}$ are the $k-$th row of $W$ and $\xi$ respectively.

So 
\begin{align}
	\frac{\partial  \left[\|W\theta x_1-\xi x_2\|_2^2\right]}{\partial  \theta}= 2\left[\left(W^{\top}T\right)x^{\top}\right],
\frac{\partial  \left[\|W\theta x_1-\xi x_2\|_2^2\right]}{\partial  W}=2\left[ T (\theta x)^{\top}\right]
\end{align}

Let
\begin{align}
\frac{\partial \Lbyol}{\partial \theta}=0,\frac{\partial \Lbyol}{\partial W}=0,
\end{align}
we have that 
\begin{align}
\frac{\partial \E_{\substack{(x, t_1, t_2)\sim(\gX, \gT_1, \gT_2)}}\left[\|W\theta x_1-\xi x_2\|_2^2\right]
}{\partial \theta}=\E_{\substack{(x, t_1, t_2)\sim(\gX, \gT_1, \gT_2)}}[\frac{\partial \|W\theta x_1-\xi x_2\|_2^2}{\partial \theta}]=0\\
\frac{\partial \E_{\substack{(x, t_1, t_2)\sim(\gX, \gT_1, \gT_2)}}\left[\|W\theta x_1-\xi x_2\|_2^2\right]
}{\partial  W}=\E_{\substack{(x, t_1, t_2)\sim(\gX, \gT_1, \gT_2)}}[\frac{\partial  \|W\theta x_1-\xi x_2\|_2^2}{\partial  W}]=0
\end{align}

When the weight of target $\xi$ converge, we have $\xi^{(k)}=\xi^{(k+1)}$ in the updating rule,
\begin{align}
\xi^{(k+1)}&= \tau_{k}\xi^{k}+(1-\tau_{k})\theta^{(k)}\nonumber
\\
\theta^{(k)}& =\xi^{(k+1)}=\xi^{(k)}
\end{align}

Substituting $\xi$ by $\theta$, we obtain
\begin{align}
\E_{\substack{(x, t_1, t_2)\sim(\gX, \gT_1, \gT_2)}}\left[W^{\top}(W\theta x_1-\theta x_2 )x_1^{\top}\right]=0\\
\E_{\substack{(x, t_1, t_2)\sim(\gX, \gT_1, \gT_2)}}\left[\left(W\theta x_1-\theta x_2\right)x_1^{\top}\theta^{\top}\right]=0
\end{align}

Let $\E_{\substack{(x, t_1, t_2)\sim(\gX, \gT_1, \gT_2)}}(x_1 x_1^{\top})=A$, $\E_{\substack{(x, t_1, t_2)\sim(\gX, \gT_1, \gT_2)}} (x_2 x_1^{\top})=B$, we have that

\begin{align}
\label{eq:syl}
W^{\top}(W\theta A)=W^{\top}\theta B\nonumber\\
W\theta A\theta^{\top}=\theta B\theta^{\top}\nonumber\\
\Rightarrow W\theta -\theta BA^{-1}=\vzero
\end{align}
To solve Eq.~\ref{eq:syl}~(which is called Sylvester's equation), we using the Kronecker product notation and the vectorization operator $\operatorname{vec}$, we can rewrite the equation in the form
\begin{align}
	\left(I_{m} \otimes W-(BA^{-1})^{T} \otimes I_{n}\right) \operatorname{vec} \theta=\operatorname{vec} \vzero
\end{align}

So it has a non-trivial solution $\theta$ if and only if $\left(I_{m} \otimes W-(BA^{-1})^{T} \otimes I_{n}\right) $ has a non-trivial null space. An equivalent condition to having a non-trivial null space is having zero as an eigenvalue. Let $W$ has eigenvalues in common with $BA^{-1}$, then we have a non-trivial solution of $\theta$, which is exactly the prevention for collapse.

\end{document}